\begin{document}

\title{Balanced $k$-Means and Min-Cut Clustering}

\author{Xiaojun~Chang,
        Feiping~Nie,
        Zhigang~Ma,
        and~Yi~Yang
\IEEEcompsocitemizethanks{\IEEEcompsocthanksitem Xiaojun Chang and Yi Yang are with School of Information Technology and Electrical Engineering, The University of Queensland, Australia. (E-mail: cxj273@gmail.com, yi.yang@uq.edu.au)\protect\\

\IEEEcompsocthanksitem Feiping Nie is with Department of Computer Science and Engineering, University of Texas, US.

\IEEEcompsocthanksitem Zhigang Ma is with School of Computer Science, Carnegie Mellon University, US.}
}

\IEEEtitleabstractindextext{%

\begin{abstract}
Clustering is an effective technique in data mining to generate groups that are the matter of interest.
Among various clustering approaches, the family of $k$-means algorithms and min-cut algorithms gain most popularity due to their simplicity and efficacy. The classical $k$-means algorithm partitions a number of data points into several subsets by iteratively updating the clustering centers and the associated data points. By contrast, a weighted undirected graph is constructed in min-cut algorithms which partition the vertices of the graph into two sets. However, existing clustering algorithms tend to cluster minority of data points into a subset, which shall be avoided when the target dataset is balanced. To achieve more accurate clustering for balanced dataset, we propose to leverage exclusive lasso on $k$-means and min-cut to regulate the balance degree of the clustering results. By optimizing our objective functions that build atop the exclusive lasso, we can make the clustering result as much balanced as possible. Extensive experiments on several large-scale datasets validate the advantage of the proposed algorithms compared to the state-of-the-art clustering algorithms.
\end{abstract}

\begin{IEEEkeywords}
Balanced $k$-Means, Min-Cut Clustering
\end{IEEEkeywords}}

\maketitle

\IEEEdisplaynontitleabstractindextext
\IEEEpeerreviewmaketitle
\IEEEraisesectionheading{\section{Introduction}\label{sec:introduction}}

\IEEEPARstart{C}{lustering} is a fundamental research topic in data mining and is widely used for many applications in the field of artificial intelligence, statistics and social sciences \cite{clustering}\cite{algclu}\cite{clustfs} \cite{DBLP:journals/tsmc/WuHMZL12} \cite{coupledclustering} \cite{adaclu}. The objective of clustering is to partition the original data points into a number of groups so that the data points within the same cluster are close to each other while those in different clusters are far away from each other \cite{clualgorithm} \cite{DBLP:journals/tsmc/NieXL12} \cite{DBLP:conf/aaai/ChangNYH15} \cite{survey}.

Among various approaches for clustering, $K$-means and min-cut are two most popular choices in reality because of their simplicity and effectiveness \cite{kmeans}. The general procedure of traditional $K$-means (TKM) is to randomly initialize $c$ clustering centers, assign each data point to its nearest cluster and compute a new clustering center iteratively. Some researchers claim that the curse of dimensionality may deteriorate the performance of TKM \cite{dimension_reduction}. A straightforward solution of this problem is to project the original dataset to a low-dimensional subspace by dimensionality reduction, for example, PCA, before performing TKM. Discriminative analysis has been shown effective in enhancing clustering performance \cite{dimension_reduction} \cite{discluster} \cite{adaptive}. Motivated by this fact, discriminative $k$-means (DKM) \cite{dkmeans} is proposed to incorporate discriminative analysis and clustering into a single framework to formalize the clustering as a trace maximization problem.

By contrast, the min-cut clustering is realized by constructing a weighted undirected graph and then partitioning its vertices into two sets so that the total weight of the set of edges with endpoints in different sets is minimized \cite{mincutbinary} \cite{information-theoretic}. Among several graph clustering methods, min-cut tends to provide more balanced clusters as compared to other graph clustering criterion. As the within-cluster similarity in min-cut method is explicitly maximized, solving the min-cut clustering problem is nontrivial. The main difficulty lies in the constraint on the solution. Thus, to make this problem tractable, researchers proposed to relax the constraint.

Although $k$-means and min-cut have achieved promising performance in many applications, they have certain limit.Given that the distribution of the data points is balanced, one would expect the clustering result to reflect such balance. That being said, a clustering algorithm shall avoid partitioning a minority of data points into a cluster. Nonetheless, both K-means and min-cut, as well as some other similar clustering algorithms, do not guarantee balanced clustering result. In many real world data mining applications, the data from each cluster are about the same. For example, the male and female populations in the same age range cannot be very different. Therefore, for those data which are equally distributed, it is more reasonable to explicitly guarantee the clustering results balanced. 

Motivated by the limit of $k$-means and min-cut for handling balanced data, we propose to design a balanced clustering algorithm. Specifically, the exclusive lasso proposed by Zhou \emph{et al.} \cite{exclusivelasso} has been exploited in our approach to fulfill such purpose. The exclusive lasso was originally used for feature selection across multiple tasks. It models the scenario when variables in the same group compete with each other. With exclusive lasso, if one feature in a group is given a large weight, it tends to assign small or even zero weights to the other features in the same group.
Suppose that the exclusive lasso is applied on a bunch of data points across multiple categories. In a similar manner, we introduce a competition among different categories for the same data point. If more data points are clustered into one category, other categories would get fewer data points. The exclusive lass, thus in a sense, measures the balance degree of the clustering result. The smaller value the exclusive lasso obtains, the more balanced the clustering result is. With such insight, we formulate our clustering approach based on minimizing the exclusive lasso. In this paper, we particularly incorporate the exclusive lasso into $k$-means clustering and min-cut clustering, aiming to promote these two mainstream clustering approaches with stronger ability of attaining balanced clusters.

The major contributions of this paper can be summarized as follows:

\begin{enumerate}
  \item We leverage the exclusive lasso to introduce a competition among different categories for the same data point, thus enhancing the balance of the clustering result.

  \item The exclusive lasso is particularly tailored for $k$-means and min-cut. Thus, these two mostly used clustering approaches are able to achieve more balanced clustering result.

  \item The proposed algorithms are non-smooth and difficult to optimize. We propose a new iterative solution to solve the problems.

\end{enumerate}

The rest of this paper is organized as follows. After revisiting the related work on $k$-means, min-cut and the exclusive lasso in Section 2, we detail the proposed balanced $k$-means and min-cut algorithms in Section 3. Extensive experiments are given in Section 4 and Section 5 concludes this paper.

\section{Related Work}

In this section, we briefly review the research on $k$-means, min-cut and the exclusive lasso.
\subsection{The Classical $K$-means}

As one of the most efficient clustering algorithms, $k$-means clustering  has been widely applied to real-world applications. The centroids of clusters are utilized to characterize the data. The objective of $k$-means is to minimize the sum of the squared errors defined by:
\begin{equation}
J_k = \sum_{k=1}^K \sum_{i \in C_k} \|x_i - m_k\|^2,
\end{equation}
where $X = (x_1, \dots, x_n)$ denotes the data matrix and $m_k = \sum_{i \in C_k} x_i/n_k$ is the centroid of a cluster $C_k$ of $n_k$ data points.

 Previous work \cite{HKM} has shown that $H$-orthogonal non-negative matrix factorization (NMF) is equivalent to relaxed $k$-means clustering. Thus, $k$-means clustering can be reformulated using the clustering indicator as follows:

\begin{equation}
\begin{aligned}
& \min_{F,G} \|X - HF^T \|_F^2 \\
& s.t.~G_{ik} \in \{0, 1\}, \sum_{k=1}^K H_{ik} = 1, \forall i = 1, 2, \dots, n
\end{aligned}
\end{equation}
where $X \in \mathbb{R}^{d \times n}$ is the input data matrix with $n$ data represented by $d$-dimensional features; $F \in \mathbb{R}^{d \times K}$ is the clustering indicator matrix; $H \in \mathbb{R}^{n \times K}$ is the clustering assignment matrix and each row of $H$ satisfies the 1-of-$K$ coding scheme (if a data point $x_i$ is assigned to the $k$-th cluster then $H_{ik} = 1$ and $H_{ik}=0$ otherwise). In this paper, given a matrix $X = \{x_{ij}\}$, its $i$-th row, $j$-th column are denoted as $x^i$, $x_j$, respectively.

In the literature, the classical $K$-means and its variants have been applied to many data mining applications. For example, Mehrdad \emph{et al.} \cite{HKM} propose a harmony $K$-means (HKM) algorithm based on harmony search optimization method and applied it to document clustering. HKM can be proved by means of finite Markov chain theory to converge to the global optimum. Zhang \emph{et al.} \cite{initialcenter} propose a new neighborhood density method for selecting initial cluster centers for $K$-means clustering. Deepak \emph{et al.} \cite{quantizationKmeans} employ quantization schemes to retain the outcome of clustering operations. Although these methods get good performance, they have not considered how to achieve balanced clustering result when the given data points are evenly distributed. By contrast, we aim to develop a balanced $k$-means clustering algorithm that well addresses this issue.

\subsection{Min-Cut}
The principle of min-cut is rooted in graph theory. It needs a graph based on a weight matrix $W \in \mathbb{R}^{n \times n}$ built from $n$ data points $\{x_1, \dots, x_n \}$. The min-cut graph clustering objective function can be generalized as:

\begin{equation}
J = \sum_{1 \leq p < q \leq K} s(C_p, C_q) + s(C_p, C_q) = \sum_{k=1}^K s(C_k, \overline{C_k})
\end{equation}
where $K$ is the number of clusters, $C_k$ is the $k$-th cluster (sub-graph in graph G), $\overline{C_k}$ is the complement of a subset $C_k$ in graph $G$, and for any set $A$ and $B$
\begin{equation}
s(A, B) = \sum_{i \in A} \sum_{j \in B} W_{ij},
\end{equation}
\begin{equation}
d_i = \sum_j W_{ij}.
\end{equation}

We denote $q_k~(k=1, \dots, K)$ as the cluster indicators where the $i$-th element of $q_k$ is set to 1 if the $i$-th data point $x_i$ belongs to the $k$-th cluster, and 0 otherwise. For example, if the data points within each cluster are adjacent,
\begin{equation}
q_k = ( 0, \dots, 0, \overbrace{1, \dots, 1}^{n_k}, 0, \dots, 0)^T.
\end{equation}

After simple mathematical deduction, we can find that
\begin{equation}\nonumber
s(C_k, \overline{C_k} )= \sum_{i \in C_k} \sum_{j \in \overline{C_k}} W_{ij} = q_k^T( D - W) q_k
\end{equation}

\begin{equation}\nonumber
\sum_{i \in C_k} d_i = q_k^TDq_k
\end{equation}

\begin{equation}
s(C_k, C_k) = q_k^TWq_k,
\end{equation}
where $D$ is a diagonal matrix with the $i$-th diagonal element as $d_i$. The objective function of min-cut method can therefore be reformulated as:

\begin{equation}
J = \sum_{k = 1}^K q_k^T (D - W)q_k
\end{equation}

Min-Cut clustering has been applied in various applications. Wang \emph{et al.} \cite{FCSC} propose a flexible and generalized framework for constrained spectral clustering, interpret the algorithm as finding the normalized min-cut of a labeled graph, and apply it to constrained image segmentation. Dynamic graph clustering algorithm, proposed by \cite{increminimutcut} can provide strong theoretical quality guarantee on clusters. However, none of the existing work on min-cut is capable of balanced clustering when necessary, which shall be addressed by our newly proposed balanced min-cut algorithm.

\subsection{Exclusive Lasso}

Zhou \emph{et al.} propose the exclusive lasso to model the scenario when variables in the same group compete with each other. They apply it to multi-task feature selection and obtain good performance. The exclusive lasso \cite{exclusivelasso} is defined as follows:

\begin{equation}
\|\beta \|_e = \sqrt{\sum_{j=1}^d (\sum_{k=1}^m \|\beta _k^j\|)^2},
\label{lasso}
\end{equation}
where $\|\beta \|_e$ is a regularizer that controls the complexity of the combination weights.

In \cite{exclusivelasso}, the regularizer introduces an $l_1$-norm to combine the weights for the same category used by different data points and an $l_2$-norm to combine the weights of different categories. Since $l_1$-norm tends to achieve a sparse solution, the construction in the exclusive lasso essentially introduces a competition among different categories for the same data points.

In our work, the exclusive lasso is used as a balance constraint. We will prove that the value of exclusive lasso indicates the balance degree of our clustering algorithms.

\section{The proposed algorithm}

In this section, we illustrate the proposed approach in details.
\subsection{Balance Constraint}

Given $F$ as a cluster indicator matrix, the exclusive lasso of $F$ is written as 
\begin{equation}
\|F\|_e = \sqrt{\sum_{j=1}^c (\sum_{i=1}^n \|f_{ij}\|)^2}.
\end{equation}
With simple mathematical deduction, the exclusive lasso can be rewritten as: 
\begin{equation}
\|F\|_e = Tr(F^T\mathbf{1}\mathbf{1}^TF).
\end{equation}
From this equation, we can observe that the value of exclusive lasso equals the square-sum of the number of data points in each class. In the following, we prove that the most balanced clustering can be achieved by minimizing the exclusive lasso.

\newtheorem{theorem}{Theorem}
\begin{theorem}
Given $n_1 + n_2 + \dots + n_k = N$ and $n_i|_{i=1}^k \geq 0$, $\sum_{i=1}^k n_i^2$ arrives at its minimum when $n_i = \frac{N}{k}$.
\end{theorem}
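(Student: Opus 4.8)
The plan is to eliminate the linear constraint by a change of variables and then read off the answer from a sum of squares. First I would set $n_i = \frac{N}{k} + \delta_i$ for $i = 1,\dots,k$, so that the constraint $\sum_{i=1}^k n_i = N$ becomes simply $\sum_{i=1}^k \delta_i = 0$. Substituting and expanding,
\begin{equation}
\sum_{i=1}^k n_i^2 = \sum_{i=1}^k \left( \frac{N}{k} + \delta_i \right)^2 = \frac{N^2}{k} + \frac{2N}{k}\sum_{i=1}^k \delta_i + \sum_{i=1}^k \delta_i^2 = \frac{N^2}{k} + \sum_{i=1}^k \delta_i^2,
\end{equation}
since the cross term vanishes under $\sum_i \delta_i = 0$. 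As $\sum_i \delta_i^2 \geq 0$ with equality if and only if every $\delta_i = 0$, the objective is bounded below by $N^2/k$ and attains this value exactly when $n_i = N/k$ for all $i$; this simultaneously yields the minimizer and its uniqueness.

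Next I would remark that the non-negativity constraints $n_i \geq 0$ are inactive: the candidate $n_i = N/k$ already satisfies them (since $N \geq 0$), so they cannot push the optimum to the boundary. This is the one point in the write-up that deserves an explicit sentence, because a constrained minimum can in general be attained on the boundary of the feasible region; here it is not.

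For completeness I would note that the same conclusion follows from several standard one-line arguments: Cauchy--Schwarz gives $N^2 = \left( \sum_i n_i \right)^2 \leq k \sum_i n_i^2$ with equality iff all $n_i$ are equal; convexity of $t \mapsto t^2$ together with Jensen's inequality gives $\frac{1}{k}\sum_i n_i^2 \geq \left( \frac{1}{k}\sum_i n_i \right)^2$; and a Lagrange-multiplier computation forces $2 n_i = \lambda$ for all $i$, the resulting stationary point being the global minimum because the objective is a convex (indeed strictly convex along the constraint plane) quadratic over a convex feasible set.

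In truth there is no real obstacle here: the statement is an elementary consequence of the strict convexity of the square function, and any of the above routes dispatches it. The only things to be careful about in exposition are tracking the equality conditions, so that the minimizer is pinned down uniquely, and observing that the bound constraints do not bind, which reduces the problem to minimizing a positive-definite quadratic over an affine subspace.
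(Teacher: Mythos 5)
Your proof is correct, but your primary route differs from the paper's. The paper invokes the Cauchy inequality $\bigl(\sum_i n_i^2\bigr)\bigl(\sum_i b_i^2\bigr) \geq \bigl(\sum_i n_i b_i\bigr)^2$ with $b_i = 1$, so that $\sum_i n_i^2 \geq N^2/k$ with equality exactly when all $n_i$ coincide --- this is the argument you relegate to your ``for completeness'' remark. Your main argument instead substitutes $n_i = \frac{N}{k} + \delta_i$, kills the cross term using $\sum_i \delta_i = 0$, and reads off $\sum_i n_i^2 = \frac{N^2}{k} + \sum_i \delta_i^2$. The two are essentially equivalent in content (your identity is the equality case of the same quadratic estimate), but your decomposition buys a cleaner handling of the equality condition and hence uniqueness of the minimizer, which the paper's write-up leaves implicit (indeed the paper states Cauchy with $a_i$'s and $n_i$'s interchangeably and never explicitly evaluates the lower bound $N^2/k$). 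Your explicit observation that the constraints $n_i \geq 0$ are inactive at the candidate point is also a genuine improvement in rigor: the paper silently ignores the nonnegativity constraints, and although they happen not to bind here, saying so is the right thing to do when minimizing over a constrained region.
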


\begin{proof}
According to the Cauchy inequality,
\begin{equation}\nonumber
\begin{aligned}
& (n_1^2 + n_2^2 + \dots + n_k^2)(b_1^2 + b_2^2 + \dots +  b_k^2) \\
 \geq & (a_1b_1 + a_2b_2 + \dots + a_kb_k)^2
\end{aligned}
\end{equation}
Let $b_i|_{i=1}^k = 1$, the equality holds when $n_1 = n_2 = \dots = n_k$. Hence, we can easily have the conclusion that when $n_i = \frac{N}{k}$, $\sum_{i=1}^k n_i^2$ get its minimal value.
\end{proof}

According to the above theorem, by minimizing the exclusive lasso, each cluster will have $\frac{n}{c}$ data points. The most balanced clustering result is thus obtained. Hence, we use the the exclusive lasso as the balance constraint.

\subsection{Balanced $k$-Means}
In the setting of clustering, given $n$ data points $\{x_i\}|_{i=1}^n$, we have a data matrix $X = (x_1, \dots, x_n) \in \mathbb{R}^{d \times n}$. Our goal in balanced $k$-means is to partition $\{x_i\}_|{i=1}^n$ into $K$ balanced clusters among different categories.

Noting that the exclusive lasso is capable of introducing competition among different categories, we apply the exclusive lasso to the classical $k$-means to obtain balanced clusters. The proposed objective function of balanced $k$-means is formulated as follows:

\begin{equation}
\min_{F \in Ind} \|X - HF^T\|_F^2 + \gamma \|F\|_e
\end{equation}

By substituting $\|F\|_e$ with \eqref{lasso}, the objective function can be rewritten as follows:

\begin{equation}
\min_{F \in Ind} \|X - HF^T\|_F^2 + \gamma Tr(F^T \mathbf{1} \mathbf{1}^T F)
\label{objkmeans}
\end{equation}
where $F \in Ind$ means $F\in \mathbb{R}^{n \times K}$ is an indicator matrix used for clustering; $H\in \mathbb{R}^{d \times K}$ is the clustering assignment matrix; $\gamma$ is a parameter.

The optimal $H$ and $F$ would minimize the objective function value. Since it is difficult to compute the optimal $H$ and $F$ simultaneously, we present an iterative approach to optimize this algorithm. To be more specific, we can obtain the optimal $H$ by fixing $F$ by a simple linear equation. Similarly, we can get the optimal $F$ by fixing $H$.

For a fixed $F$, by setting the derivative of \eqref{objkmeans} w.r.t $H$ to zero, we obtain

\begin{equation}
H = XF(F^TF)^{-1}
\end{equation}

Then we fix $H$, we update $F$ as follows: we update one row of $F$ each time while fixing the other rows of the prediction matrix $F$. Specifically, the updating of one row is realized by finding the element being 1 that results in the minimum of \eqref{objkmeans}. We iterate the updating of each row until convergence as shown in Algorithm 1.

\begin{algorithm}
\label{algkmeans}
\caption{Algorithm to solve the objective function of balanced $k$-means}
$\mathbf{Input}$: Data matrix $X\in \mathbb{R}^{d \times n}$ \\
$\mathbf{Output}$: Indicator matrix $F\in \mathbb{R}^{n \times K}$ \\
\begin{algorithmic}[1]
\STATE Initialize the indicator matrix $F$ randomly.
\REPEAT
\STATE Fixing $F$, compute $H$ according to $H = XF(F^TF)^{-1}$
\STATE Fixing $H$, update $F$ as follows: \\
~~Update each row of $F$ while fixing the remaining rows.
\UNTIL{CONVERGENCE}
\end{algorithmic}
$\mathbf{Return}$: Indicator matrix $F$.
\end{algorithm}

$\mathbf{Computational~Analysis}$: The computation complexity of Algorithm 1 is $O(K)$. Since the indicator matrix $F$ is sparse, this inverse operation is very efficient. When sufficient computational resources are available and parallel computing is implemented, this algorithm can be solved with desired efficiency.

$\mathbf{Convergence~Analysis}$: The following theorem guarantees the convergence of Algorithm 1.

\begin{theorem}
Algorithm 1 decreases the objective value of Eq. \eqref{objkmeans} in each iteration.
\end{theorem}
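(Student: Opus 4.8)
The plan is to exhibit a monotone-decreasing sequence of objective values across the two alternating blocks of each iteration, so that after step~3 (updating $H$) and step~4 (updating $F$ row by row) the value of Eq.~\eqref{objkmeans} is no larger than at the start of the iteration. Denote the objective by $\Phi(H,F)=\|X-HF^T\|_F^2+\gamma\,Tr(F^T\mathbf{1}\mathbf{1}^TF)$. The argument splits naturally into the $H$-update and the $F$-update.

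First I would handle the $H$-update. With $F$ held fixed, $\Phi(\cdot,F)$ is a (strongly) convex quadratic in $H$, since the penalty term $\gamma\,Tr(F^T\mathbf{1}\mathbf{1}^TF)$ does not depend on $H$ and $\|X-HF^T\|_F^2$ has Hessian governed by $F^TF\succeq 0$. Setting the derivative to zero gives the unique minimizer $H^{\mathrm{new}}=XF(F^TF)^{-1}$ (assuming $F^TF$ invertible, which holds as long as no cluster is empty). Hence $\Phi(H^{\mathrm{new}},F)\le \Phi(H^{\mathrm{old}},F)$, with the inequality immediate from global optimality of the closed-form solution. This step is routine.

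Next I would handle the $F$-update, which is where the real content lies. Here $H$ is fixed and we sweep over the rows of $F$; when updating row $i$ we keep all other rows fixed. Because $F$ is constrained to be an indicator matrix (each row is a standard basis vector $e_k$), changing row $i$ from $e_{k_{\mathrm{old}}}$ to $e_{k_{\mathrm{new}}}$ is a discrete choice among $K$ options, and the algorithm by construction picks the option minimizing $\Phi$. Therefore each single-row update cannot increase $\Phi$: at worst it keeps row $i$ unchanged. Chaining these inequalities over all $n$ rows of a full sweep yields $\Phi(H,F^{\mathrm{new}})\le \Phi(H,F^{\mathrm{old}})$. I would emphasize that the exclusive-lasso term $Tr(F^T\mathbf{1}\mathbf{1}^TF)=\sum_j(\sum_i F_{ij})^2$ is also a well-defined function of the single row being varied (a quadratic in which column receives the $1$), so the per-row minimization is a genuine finite comparison of $K$ scalar values and the decrease is valid term-by-term for the whole objective, not just the $k$-means part.

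Combining the two blocks, one iteration takes $\Phi(H^{\mathrm{old}},F^{\mathrm{old}})\ge \Phi(H^{\mathrm{new}},F^{\mathrm{old}})\ge \Phi(H^{\mathrm{new}},F^{\mathrm{new}})$, which is exactly the claimed monotone decrease. Since $\Phi\ge 0$ is bounded below, the sequence of objective values converges; I would add this as a short remark, noting that monotone decrease plus boundedness gives convergence of the objective (though not, without more work, convergence of the iterates themselves). The main obstacle to watch is the degenerate case in which an $F$-update empties a cluster, making $F^TF$ singular so that the stated formula for $H$ breaks down; I would note this as a caveat, assuming nonempty clusters are maintained (or that a zero column is handled by convention), since the monotonicity claim itself is unaffected.
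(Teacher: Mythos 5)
Your proposal is correct and follows essentially the same alternating-minimization argument as the paper: the closed-form $H$-update is a global minimizer with $F$ fixed, the row-wise $F$-update is a finite minimization that cannot increase the objective, and chaining the two inequalities gives the monotone decrease. If anything, your chaining $\Phi(H^{\mathrm{old}},F^{\mathrm{old}})\ge\Phi(H^{\mathrm{new}},F^{\mathrm{old}})\ge\Phi(H^{\mathrm{new}},F^{\mathrm{new}})$ and the caveat about $F^TF$ becoming singular when a cluster empties are stated more carefully than in the paper's own proof, which loosely ``adds'' the two block inequalities.
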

\begin{proof}
In each iteration $t$ of Algorithm 1, according to Step 3, we know that:
\begin{equation}
H_{t+1} = \min_F \|X - HF_t^T\|_{F}^2 + \gamma Tr(F_t^T\mathbf{1}\mathbf{1}^TF_t)
\end{equation}

Thus, we have:

\begin{equation}
\begin{aligned}
& \|X - H_{t+1}F_{t}^T\|_F^2 + \gamma Tr(F_{t}^T \mathbf{1}\mathbf{1}^TF_{t}) \\
 \leq & \|X - H_tF_{t}^T\|_F^2 + \gamma Tr(F_{t}^T \mathbf{1}\mathbf{1}^TF_{t})
\end{aligned}
\label{H}
\end{equation}

According to step 4, we obtain:

\begin{equation}
\begin{aligned}
& \|X - H_tF_{t+1}^T\|_F^2 + \gamma Tr(F_{t+1}\mathbf{1}\mathbf{1}^TF_{t+1}) \\
 \leq & \|X - H_tF_{t}^T\|_F^2 + \gamma Tr(F_{t}\mathbf{1}\mathbf{1}^TF_{t})
\end{aligned}
\label{F}
\end{equation}

Adding Eq. \eqref{H} and Eq. \eqref{F}, we arrive at:

\begin{equation}
\begin{aligned}
& \|X - H_{t+1}F_{t+1}^T\|_F^2 + \gamma Tr(F_{t+1}\mathbf{1}\mathbf{1}^TF_{t+1}) \\
 \leq & \|X - H_tF_{t}^T\|_F^2 + \gamma Tr(F_{t}\mathbf{1}\mathbf{1}^TF_{t})
\end{aligned}
\end{equation}
which proves that the algorithm decreases the objective function value in each iteration.
\end{proof}

According to Theorem 2, we can see that the value of the objective function \eqref{objkmeans} decrease at each iteration of Algorithm 1. In addition, it is clear that \eqref{objkmeans} is lower bounded by 0. Therefore, Algorithm 1 is guaranteed to converge. 

\subsection{Balanced Min-Cut}
We similarly aim to cluster $n$ data points $X = \{ x_1, \dots, x_n\}\in \mathbb{R}^{d \times n}$ into $K$ clusters. To begin with, we use the Gaussian function to construct a weight matrix $A$. The weight $A_{ij}$ is defined as:
\begin{eqnarray}
A_{ij} =
\left\{
\begin{array}{lll}
exp(-\frac{\|x_i - x_j\|^2}{\delta ^2}), x_i~and~x_j~are~k\\nearest~neighbors.\\
0, ~otherwise
\end{array}
\right.
\label{weightA}
\end{eqnarray}

where $\delta$ is utilized to control the spread of neighbors.
Given the weight matrix $A$ and the cluster indicator matrix $F$, the objective function of min-cut graph clustering is formulated as follows:
\begin{equation}
\min_{F \in Ind} \mathbf{1}^TA\mathbf{1} - Tr(F^TAF),
\end{equation}
which is equivalent to the following objective function:
\begin{equation}
\max_{F \in Ind} Tr(F^TAF)
\end{equation}

We further incorporate the exclusive lasso into min-cut and get the following objective function:

\begin{equation}
\max_{F \in Ind} Tr(F^TAF) - \gamma \|F\|_e
\end{equation}

In the same manner, we substitute $\|F\|_e$ with \eqref{lasso} and rewrite the objective function as follows:

\begin{equation}
\max_{F \in Ind} Tr(F^TAF) - \gamma Tr(F^T\mathbf{1}\mathbf{1}^TF)
\end{equation}

With a simple mathematical deduction, the objective function is rewritten as:

\begin{equation}
\max_{F \in Ind} Tr\left(F^T (\rho I + A - \gamma \mathbf{1} \mathbf{1}^T)F\right),
\label{mincutobj}
\end{equation}
where $\rho$ is a large enough constant to make $\rho I + A - \gamma \mathbf{1}\mathbf{1}^T$ positive-definite. Defining $B=(\rho I + A - \gamma \mathbf{1}\mathbf{1}^T) F$, we update $F$ by solving $\max_{F \in Ind} Tr(F^T B)$. $F$ is iteratively updated until convergence as shown in
Algorithm 2.

\begin{algorithm}
\label{algmincut}
\caption{Algorithm to solve the objective function of balanced min-cut}
$\mathbf{Input}$: Data matrix $X$  \\
$\mathbf{Output}$: Indicator matrix $F$ \\
\begin{algorithmic}[1]
\STATE Compute the weight matrix $A$ using Eq \eqref{weightA}.
\REPEAT
\STATE Compute $B$ according to $B = (\rho I + A - \gamma \mathbf{1}\mathbf{1}^T)F$
\STATE Update $F$ by solving $\max_{F \in Ind} Tr(F^TB)$
\UNTIL{CONVERGENCE}
\end{algorithmic}
$\mathbf{Return}$: Indicator matrix $F$
\end{algorithm}

$\mathbf{Compuational~Analysis}$: The computation complexity of Algorithm 2 is $O(n)$.

$\mathbf{Convergence~Analysis}$: The following theorem guarantees the convergence of Algorithm 2.

\begin{theorem}
\label{the3}
Algorithm 2 increases the objective function value of Eq. \eqref{mincutobj} in each iteration.
\end{theorem}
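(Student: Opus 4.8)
The plan is to mirror the convergence argument for Algorithm 1 (Theorem~2), exploiting that the matrix $M := \rho I + A - \gamma\mathbf{1}\mathbf{1}^T$ has been chosen symmetric positive-definite, and that the per-iteration subproblem $\max_{F\in Ind}Tr(F^TB)$ is solved \emph{exactly} (it separates over the rows of $F$: since $Tr(F^TB)=\sum_{i}\sum_{k}F_{ik}B_{ik}$, row $i$ of $F$ is set to the indicator of the column maximizing the $i$-th row of $B$). Write $F_t$ for the iterate at the start of iteration $t$, so that Step~3 produces $B_t = MF_t$ and Step~4 sets $F_{t+1} = \arg\max_{F\in Ind}Tr(F^TB_t) = \arg\max_{F\in Ind}Tr(F^TMF_t)$.

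First I would record the optimality of $F_{t+1}$: since $F_t\in Ind$ is itself a feasible competitor in the Step~4 subproblem,
\begin{equation}\nonumber
Tr(F_{t+1}^TMF_t) \ge Tr(F_t^TMF_t).
\end{equation}
Next I would apply positive-definiteness of $M$ to the matrix $F_{t+1}-F_t$, namely $Tr\big((F_{t+1}-F_t)^TM(F_{t+1}-F_t)\big)\ge 0$; expanding the quadratic form and using the symmetry of $M$ (so that $Tr(F_t^TMF_{t+1})=Tr(F_{t+1}^TMF_t)$) gives
\begin{equation}\nonumber
Tr(F_{t+1}^TMF_{t+1}) \ge 2\,Tr(F_{t+1}^TMF_t) - Tr(F_t^TMF_t).
\end{equation}
Chaining the two displays yields $Tr(F_{t+1}^TMF_{t+1}) \ge Tr(F_t^TMF_t)$, which is exactly the assertion that the objective in \eqref{mincutobj} does not decrease from one iteration to the next.

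It then remains to connect $Tr(F^TMF)$ with the original balanced min-cut objective $Tr(F^TAF)-\gamma Tr(F^T\mathbf{1}\mathbf{1}^TF)$: for any $F\in Ind$ each row has a single unit entry, so $F^TF$ is diagonal with the cluster sizes on its diagonal and $Tr(F^TF)=n$; hence $Tr(F^TMF)$ differs from the true objective only by the additive constant $\rho n$, and a monotone increase of one is a monotone increase of the other. Finally, since $Ind$ is a finite set the objective is bounded above, so the monotonically non-decreasing sequence of objective values converges (in fact the iteration stabilizes after finitely many steps once ties are broken consistently). I expect the main subtlety to be the positive-definiteness step — ensuring the quadratic-form inequality is applied to $F_{t+1}-F_t$ and combined in the right order with the linear optimality inequality — together with the observation that the $\rho I$ term only shifts the objective by a constant on $Ind$, which is what legitimizes working with $Tr(F^TMF)$ in place of the original objective.
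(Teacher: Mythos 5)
Your proposal is correct, and it is in fact more complete than the paper's own argument. The paper's proof records only the linear optimality inequality from Step 4, namely $Tr(\hat{F}^T(\rho I + A - \gamma \mathbf{1}\mathbf{1}^T)F) \geq Tr(F^T(\rho I + A - \gamma \mathbf{1}\mathbf{1}^T)F)$, and immediately asserts the monotone increase of the objective; as written, that inequality compares a cross term with the old objective value and does not by itself bound the new objective value $Tr(\hat{F}^T M \hat{F})$, where $M = \rho I + A - \gamma \mathbf{1}\mathbf{1}^T$. Your additional step — expanding $Tr\bigl((F_{t+1}-F_t)^T M (F_{t+1}-F_t)\bigr) \geq 0$ with the symmetry of $M$ and chaining it with the optimality of $F_{t+1}$ — is exactly what closes that gap, and it also explains why the paper insists on choosing $\rho$ large enough to make $M$ positive-definite (positive semi-definiteness already suffices). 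Your auxiliary observations are also sound: since every $F \in Ind$ has exactly one unit entry per row, $Tr(F^T(\rho I)F) = \rho n$, so maximizing $Tr(F^T M F)$ is the original balanced min-cut objective shifted by a constant, and boundedness over the finite set $Ind$ gives convergence of the objective values, matching the paper's remark after Theorem 3. So you take the same alternating-maximization route as the paper but supply the quadratic-form step the paper omits; the paper's version is shorter but, taken literally, incomplete.
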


\begin{proof}
In the Steps 3 and 4 of Algorithm 2, we denote the updated $B$ and $F$ by $\hat{B}$ and $\hat{F}$, respectively. Since the updated $B$ and $F$ are the optimal solutions of the problem $\max_{F \in Ind}Tr(F^TB)$, we have:
\begin{equation}
Tr(\hat{F}^T(\rho I + A - \gamma \mathbf{1}\mathbf{1}^T)F) \geq Tr(F^T(\rho I + A - \gamma \mathbf{1}\mathbf{1}^T)F),
\end{equation}
which proves that the algorithm increase the objective function value in each iteration.
\end{proof}

According to Theorem \ref{the3}, we can observe that the value of objective function \eqref{mincutobj} increases at each iteration of Algorithm 2. Therefore, Algorithm 2 is proved to converge.

\section{Experiment}

In this section, extensive experiments are conducted to evaluate the proposed clustering methods. We give two sets of experiments. The first one is to compare the proposed balanced $K$-means clustering to $K$-means based clustering algorithms, including the classical $K$-means (KM) clustering, DisCluster (DC), DisKmeans (DKM) clustering \cite{dkmeans}, AKM \cite{HKM} and HKM \cite{HKM}.
\begin{table*}[!ht]
\caption{A BRIEF SUMMARY OF THE EXPERIMENTAL DATASETS.}
\centering
\begin{tabular}{|c||r|c|c|c|}
\hline
Dataset & Size & Dimension of Features & Class Number  \\
\hline
MNIST Handwritten digit dataset & 70,000 & 1024 & 10 \\
\hline
USPS Handwritten Digit Data Set & 9298 & 256 & 10 \\
\hline
YaleB Face Data Set& 2414 & 1024 & 38  \\
\hline
ORL Face Data Set & 400 & 1024 & 40  \\
\hline
JAFFE Facial Expression Data Set& 213 & 676 & 10  \\
\hline
HumanEVA Motion Data Set & 10000 & 168 & 10  \\
\hline
Coil20 Object Data Set & 1440 & 1024 & 20  \\
\hline
CMU-PIE face dataset & 41,368 & 1024 & 68 \\
\hline
UMIST face dataset & 564 & 1024 & 20 \\
\hline
\end{tabular}
\label{setting}
\end{table*}
The second one is to compare the proposed balanced min-cut clustering to the classical min-cut clustering, MinMax Cut clustering, Ratio Cut clustering and Normalized Cut clustering algorithms.

\begin{table*}[!ht]
\renewcommand{\arraystretch}{1.5}
\caption{Performance comparison (Clustering Accuracy $\pm$ STANDARD DEVIATION) of clustering accuracy using $k$-means, DisCluster, DisKmeans, AKM, HKM and Balanced $k$-means on nine benchmark datasets. From the experimental result, we can observe that the proposed algorithm consistently outperforms the other comparison algorithms.}
\centering
\begin{tabular}{|c||c|c|c|c|c|c|c|c|}
\hline
  &  $k$-means &  DisCluster &  DisKmeans   &  AKM  &  HKM  & Balanced $k$-means  \\
\hline \hline
MNIST & $52.6 \pm 3.3$ & $53.7 \pm 2.4$ & $54.2 \pm 3.4$ & $52.2 \pm 3.3$ & $55.4 \pm 3.1$ & $\mathbf{57.3 \pm 2.4}$ \\
\hline
USPS & $65.8 \pm 2.5$ & $67.4 \pm 2.8$ & $70.4 \pm 2.6$ & $66.3 \pm 2.9$ & $71.5 \pm 2.3$ & $\mathbf{73.4 \pm 2.8}$  \\
\hline
YaleB & $16.3 \pm 1.1$ & $35.2 \pm 2.3$ & $39.7 \pm 2.5$ & $16.8 \pm 0.8$ & $41.3 \pm 3.2$ & $\mathbf{43.5 \pm 1.8}$  \\
\hline
ORL & $37.2 \pm 1.6$ & $41.2 \pm 2.1$ & $43.9 \pm 1.8$ & $37.4 \pm 1.5$ & $44.4 \pm 2.7$ & $\mathbf{47.2 \pm 2.2}$  \\
\hline
JAFFE & $58.8 \pm 2.2$ & $59.4 \pm 2.7$ & $59.9 \pm 2.5$ & $59.0 \pm 2.8$ & $60.5 \pm 1.9$ & $\mathbf{61.2 \pm 1.8}$ \\
\hline
HumanEVA & $43.2 \pm 3.2$ & $44.2 \pm 3.1$ & $45.1 \pm 2.3$ & $43.8 \pm 3.4 $ & $46.3 \pm 2.6$ & $\mathbf{47.7 \pm 2.5}$  \\
\hline
Coil20 & $68.4 \pm 2.8$ & $65.3 \pm 2.6$ & $61.3 \pm 2.3$ & $67.9 \pm 2.7$ & $70.3 \pm 2.4$ & $\mathbf{73.1 \pm 2.3}$  \\
\hline
CMU-PIE & $19.5 \pm 0.8$ & $49.8 \pm 2.7$ & $55.5 \pm 2.9$ & $21.2 \pm 1.1$ & $56.1 \pm 2.2$ & $\mathbf{57.8 \pm 2.4}$  \\
\hline
UMIST & $39.5 \pm 2.1$ & $41.3 \pm 2.6$ & $43.2 \pm 2.4$ & $39.1 \pm 1.8$ & $44.1 \pm 2.6$ & $\mathbf{46.4 \pm 2.5}$  \\
\hline
\end{tabular}
\label{kmeansaccuracy}
\end{table*}

\subsection{Datasets}

A variety of datasets are used in our experiments which are described as follows.

\begin{enumerate}
\item MNIST Handwritten Digit Dataset: The MNIST handwritten digit dataset \cite{mnist} is a large-scale dataset of handwritten digits. It is widely used as a test bed in data mining. The dataset contains 60,000 training images and 10,000 testing images. We merge all the training and testing images in the experiments. The pixel values are used as feature representation.
\item USPS handwritten digit dataset: We additionally use the USPS dataset to validate the performance on handwritten digit recognition. The dataset consists of 9298 gray-scale handwritten digit images. We resize the images to $16 \times 16$ and use pixel values as the features.
\item YaleB face dataset: The YaleB dataset \cite{yaleb} contains 2414 near frontal images from 38 persons under different illuminations. Each image is resized to $32 \times 32$ and the pixel value is used as feature representation.
\item ORL face dataset: The ORL dataset \cite{ORL} consists of 40 different subjects with 10 images each. We also resize each image to $32 \times 32$ and use pixel values to represent the images.
\item JAFFE Japanese Female Facial Expression dataset: The JAFFE dataset \cite{JAFFE} consists of 213 images of different facial expressions from 10 different Japanese female models. The images are resized to $26 \times 26$ and represented by pixel values.
\item HumanEVA Motion dataset: The HumanEVA dataset is used to evaluate the performance of our algorithm in terms of 3D motion annotation~\footnote{http://vision.cs.brown.edu/humaneva/}. This dataset contains five types of motions. Based on the 16 joint coordinates in 3D space, 1590 geometric pose descriptors are extracted using the method proposed in \cite{humaneva} to represent 3D motion data.
\item Coil20 Object dataset: We use the Coil20 dataset \cite{COIL20} for object recognition. This dataset includes 1440 gray-scale images with 20 different objects. In our experiment, we resize each image to $32 \times 32$ and use pixel values as the features.
\item CMU-PIE dataset: The CMU-PIE face dataset consists of 41,368 images of 68 people. Each person was imaged under 13 different poses, 43 different illumination conditions, and with 4 different expressions. We also use the pixel values as the feature representations.
\item UMIST face dataset: The UMIST face dataset consists of 564 images of 20 individuals with mixed race, gender and appearance. Each individual is shown in a range of poses from profile to frontal views. The pixel value is used as the feature representation.
\end{enumerate}

Table 1 gives a brief summary of all the experimental datasets.

\begin{table*}[!ht]
\renewcommand{\arraystretch}{1.5}
\caption{Performance comparison (NMI $\pm$ STANDARD DEVIATION) of clustering accuracy using $k$-means, DisCluster, DisKmeans, AKM, HKM and Balanced $k$-means on nine benchmark datasets. From the experimental result, we can observe that the proposed algorithm consistently outperforms the other comparison algorithms.}
\centering
\begin{tabular}{|c||c|c|c|c|c|c|c|c|}
\hline
  &  $k$-means &  DisCluster &  DisKmeans   &  AKM  &  HKM  & Balanced $k$-means  \\
\hline \hline
MNIST & $61.7 \pm 2.5$ & $62.5 \pm 2.8$ & $63.1 \pm 2.6$ & $61.9 \pm 2.1$ & $64.3 \pm 3.1$ & $\mathbf{66.1 \pm 2.9}$ \\
\hline
USPS & $60.8 \pm 2.3$ & $61.4 \pm 2.5$ & $61.9 \pm 2.1$ & $61.0 \pm 2.6$ & $62.5 \pm 2.2$ & $\mathbf{63.7 \pm 2.5}$  \\
\hline
YaleB & $19.5 \pm 1.8$ & $30.1 \pm 2.1$ & $31.3 \pm 2.5$ & $19.8 \pm 2.2$ & $43.8 \pm 3.2$ & $\mathbf{46.5 \pm 2.3}$  \\
\hline
ORL & $68.7 \pm 1.8$ & $69.2 \pm 2.5$ & $69.9 \pm 1.8$ & $68.9 \pm 1.7$ & $71.1 \pm 2.3$ & $\mathbf{73.2 \pm 2.4}$  \\
\hline
JAFFE & $63.2 \pm 2.5$ & $64.1 \pm 2.2$ & $64.8 \pm 2.8$ & $62.8 \pm 2.5$ & $66.2 \pm 1.9$ & $\mathbf{68.4 \pm 2.2}$ \\
\hline
HumanEVA & $75.3 \pm 2.5$ & $76.1 \pm 2.1$ & $77.3 \pm 2.4$ & $75.1 \pm 2.8 $ & $78.2 \pm 2.4$ & $\mathbf{79.5 \pm 2.1}$  \\
\hline
Coil20 & $59.3 \pm 2.3$ & $60.5 \pm 2.3$ & $61.2 \pm 2.8$ & $59.8 \pm 2.7$ & $63.2 \pm 2.9$ & $\mathbf{65.1 \pm 2.7}$  \\
\hline
CMU-PIE & $24.2 \pm 2.3$ & $25.2 \pm 2.8$ & $25.8 \pm 2.5$ & $24.7 \pm 1.6$ & $57.8 \pm 2.4$ & $\mathbf{59.3 \pm 2.6}$  \\
\hline
UMIST & $63.7 \pm 2.4$ & $64.4 \pm 2.8$ & $65.3 \pm 2.5$ & $64.1 \pm 2.1$ & $66.8 \pm 2.4$ & $\mathbf{68.1 \pm 2.3}$  \\
\hline
\end{tabular}
\label{kmeansnmi}
\end{table*}

\subsection{Parameter Setting}
There are three parameters in our algorithms. The first one is the number of nearest neighbors and the second one is the parameter $\delta$ in Eq. \eqref{weightA}. Following , we set the number of nearest neighbors to 5 in the experiments. The self-tune clustering method is utilized to determine the parameter $\delta$. For the regularization parameter $\gamma$ in Eq. \eqref{objkmeans} and Eq. \eqref{mincutobj}, we tune them by a "grid-search" strategy from $\{10^{-6}, 10^{-4}, 10^{-2}, 10^0, 10^2, 10^4, 10^6 \}$. We similarly tune the regularization parameters of all the comparison algorithms from the aforementioned range. The best results of all the comparison algorithms are reported.

\begin{table*}[!ht]
\renewcommand{\arraystretch}{1.5}
\caption{Performance comparison (Clustering Accuracy $\pm$ STANDARD DEVIATION) of clustering accuracy using the classical Min-Cut clustering, MinMax Cut clustering, Ratio Cut clustering, Normalized Cut clustering and Balanced Min-Cut clustering on nine benchmark datasets. From the experimental result, we can observe that the proposed algorithm consistently outperforms the other comparison algorithms.}
\centering
\begin{tabular}{|c||c|c|c|c|c|c|c|c|}
\hline
  &  Min-Cut &  Ratio Cut  &  Normalized Cut & MinMax Cut &  Balanced Min-Cut  \\
\hline \hline
MNIST & $56.4 \pm 2.8$ & $57.8 \pm 2.2$ & $58.4 \pm 3.2$ & $59.2 \pm 2.4$  & $\mathbf{61.4 \pm 2.0}$ \\
\hline
USPS & $72.3 \pm 2.3$ & $73.6 \pm 2.5$ & $73.9 \pm 2.2$ & $75.8 \pm 2.1$  & $\mathbf{77.5 \pm 2.7}$  \\
\hline
YaleB & $37.9 \pm 2.8$ & $38.2 \pm 2.4$ & $38.6 \pm 2.1$ & $42.2 \pm 2.6$  & $\mathbf{46.1 \pm 2.3}$  \\
\hline
ORL & $45.8 \pm 1.9$ & $46.9 \pm 2.4$ & $47.6 \pm 2.3$ & $48.8 \pm 1.7$ &  $\mathbf{50.1 \pm 2.7}$  \\
\hline
JAFFE & $60.4 \pm 2.6$ & $61.1 \pm 2.3$ & $62.5 \pm 2.8$ & $62.8 \pm 2.1$  & $\mathbf{64.3 \pm 2.4}$ \\
\hline
HumanEVA & $47.2 \pm 2.9$ & $48.3 \pm 2.5$ & $48.8 \pm 2.7$ & $49.5 \pm 3.3 $ & $\mathbf{50.9 \pm 2.8}$  \\
\hline
Coil20 & $70.3 \pm 2.4$ & $71.8 \pm 2.2$ & $77.6 \pm 2.8$ & $78.3 \pm 2.3$  & $\mathbf{81.6 \pm 1.9}$  \\
\hline
CMU-PIE & $56.2 \pm 1.3$ & $57.4 \pm 2.9$ & $58.3 \pm 2.6$ & $59.1 \pm 1.8$  & $\mathbf{61.3 \pm 2.8}$  \\
\hline
UMIST & $59.6 \pm 2.5$ & $60.1 \pm 2.2$ & $60.8 \pm 2.1$ & $62.9 \pm 1.4$ & $\mathbf{64.6 \pm 2.3}$  \\
\hline
\end{tabular}
\label{mincutaccuracy}
\end{table*}

\begin{table*}[!ht]
\renewcommand{\arraystretch}{1.5}
\caption{Performance comparison (Clustering Accuracy $\pm$ STANDARD DEVIATION) of clustering accuracy using the classical Min-Cut clustering, MinMax Cut clustering, Ratio Cut clustering, Normalized Cut clustering and Balanced Min-Cut clustering on nine benchmark datasets. From the experimental result, we can observe that the proposed algorithm consistently outperforms the other comparison algorithms.}
\centering
\begin{tabular}{|c||c|c|c|c|c|c|c|c|}
\hline
  &  Min-Cut &   Ratio Cut  &  Normalized Cut & MinMax Cut &  Balanced Min-Cut  \\
\hline \hline
MNIST & $65.3 \pm 2.9$ & $66.8 \pm 2.6$ & $67.4 \pm 3.2$ & $68.1 \pm 2.5$  & $\mathbf{69.4 \pm 2.3}$ \\
\hline
USPS & $66.5 \pm 2.3$ & $67.9 \pm 2.5$ & $68.4 \pm 2.9$ & $69.8 \pm 2.7$  & $\mathbf{71.2 \pm 2.2}$  \\
\hline
YaleB & $43.6 \pm 1.8$ & $45.2 \pm 2.6$ & $46.4 \pm 2.1$ & $47.2 \pm 1.9$  & $\mathbf{49.1 \pm 2.4}$  \\
\hline
ORL & $78.1 \pm 1.9$ & $79.5 \pm 2.6$ & $80.3 \pm 2.2$ & $80.9 \pm 1.8$ &  $\mathbf{83.2 \pm 2.6}$  \\
\hline
JAFFE & $67.8 \pm 2.5$ & $69.1 \pm 2.3$ & $69.9 \pm 2.8$ & $70.3 \pm 2.4$  & $\mathbf{73.5 \pm 1.7}$ \\
\hline
HumanEVA & $77.4 \pm 3.5$ & $78.6 \pm 2.8$ & $79.2 \pm 2.4$ & $80.4 \pm 3.1 $ & $\mathbf{82.5\pm 2.1}$  \\
\hline
Coil20 & $59.8 \pm 2.9$ & $61.4 \pm 2.3$ & $62.7 \pm 2.5$ & $63.6 \pm 2.8$  & $\mathbf{66.2 \pm 2.6}$  \\
\hline
CMU-PIE & $55.5 \pm 2.1$ & $61.4 \pm 2.6$ & $62.3 \pm 2.7$ & $62.8 \pm 2.3$  & $\mathbf{63.1 \pm 2.8}$  \\
\hline
UMIST & $82.7 \pm 2.8$ & $90.1 \pm 2.1$ & $91.2 \pm 2.7$ & $92.5 \pm 2.3$ & $\mathbf{94.8 \pm 2.9}$  \\
\hline
\end{tabular}
\label{mincutnmi}
\end{table*}

\begin{figure*}[!ht]
\centering
\subfigure[]{
\includegraphics[scale=0.2]{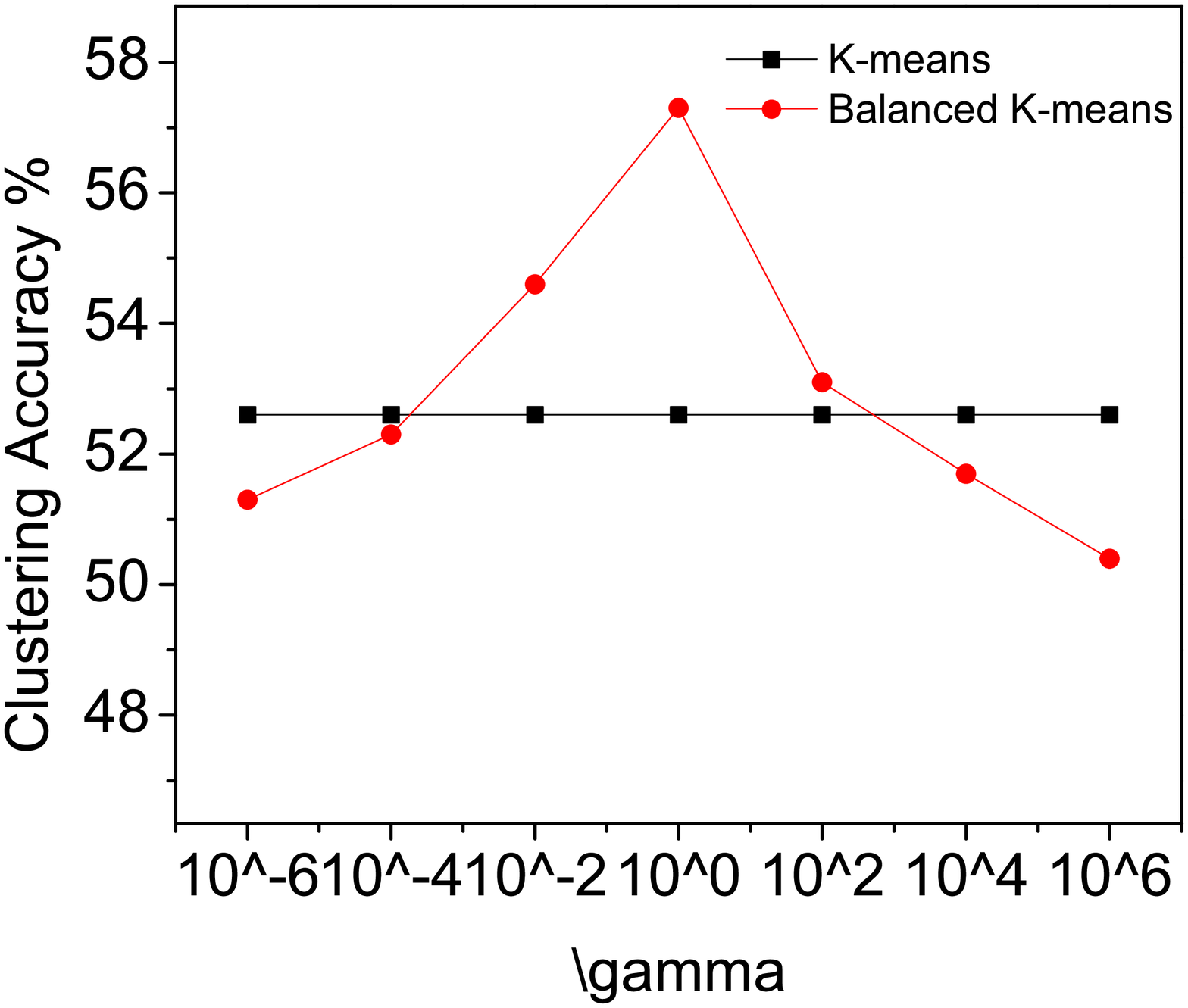}}
\subfigure[]{
\includegraphics[scale=0.2]{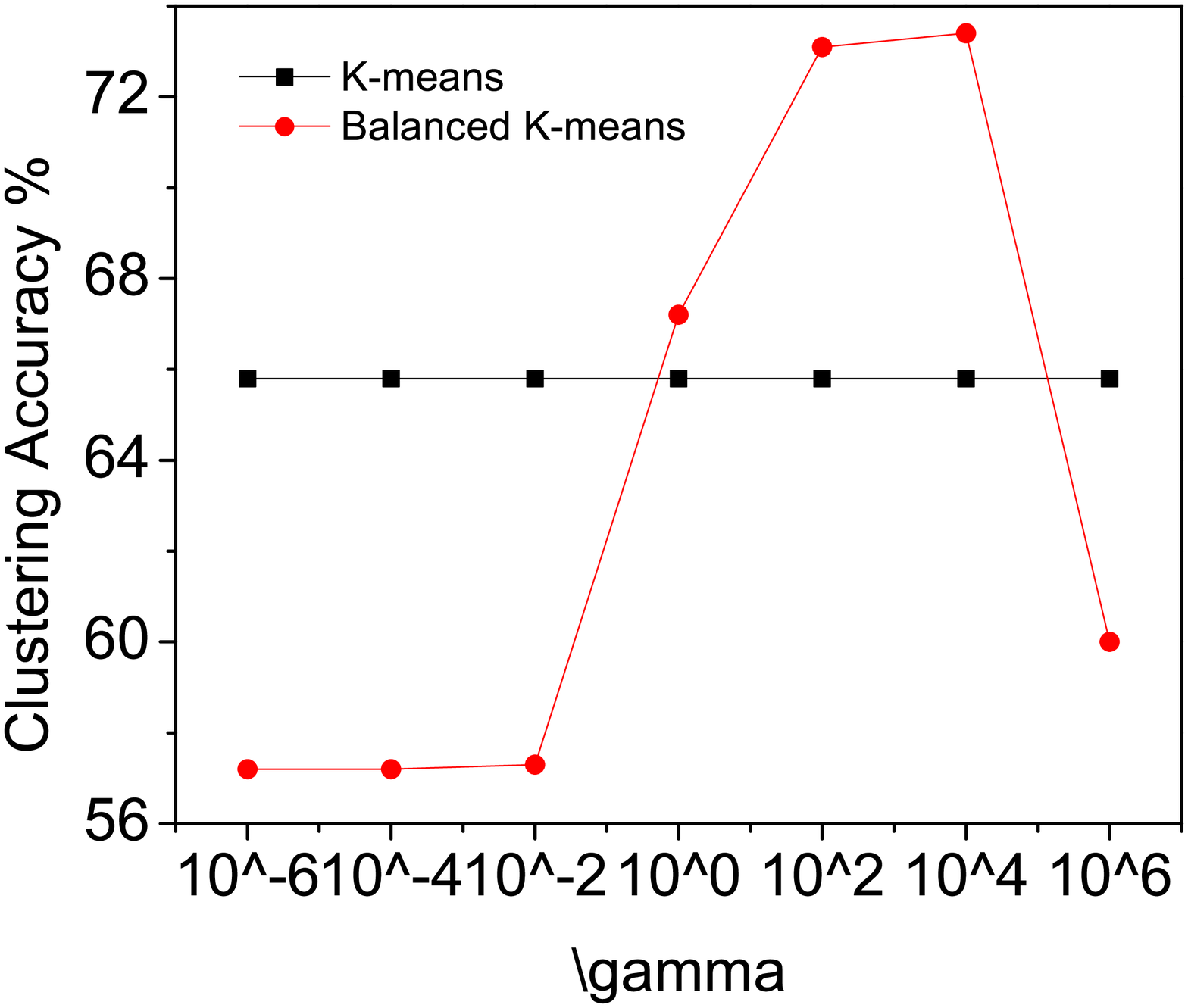}}
\subfigure[]{
\includegraphics[scale=0.2]{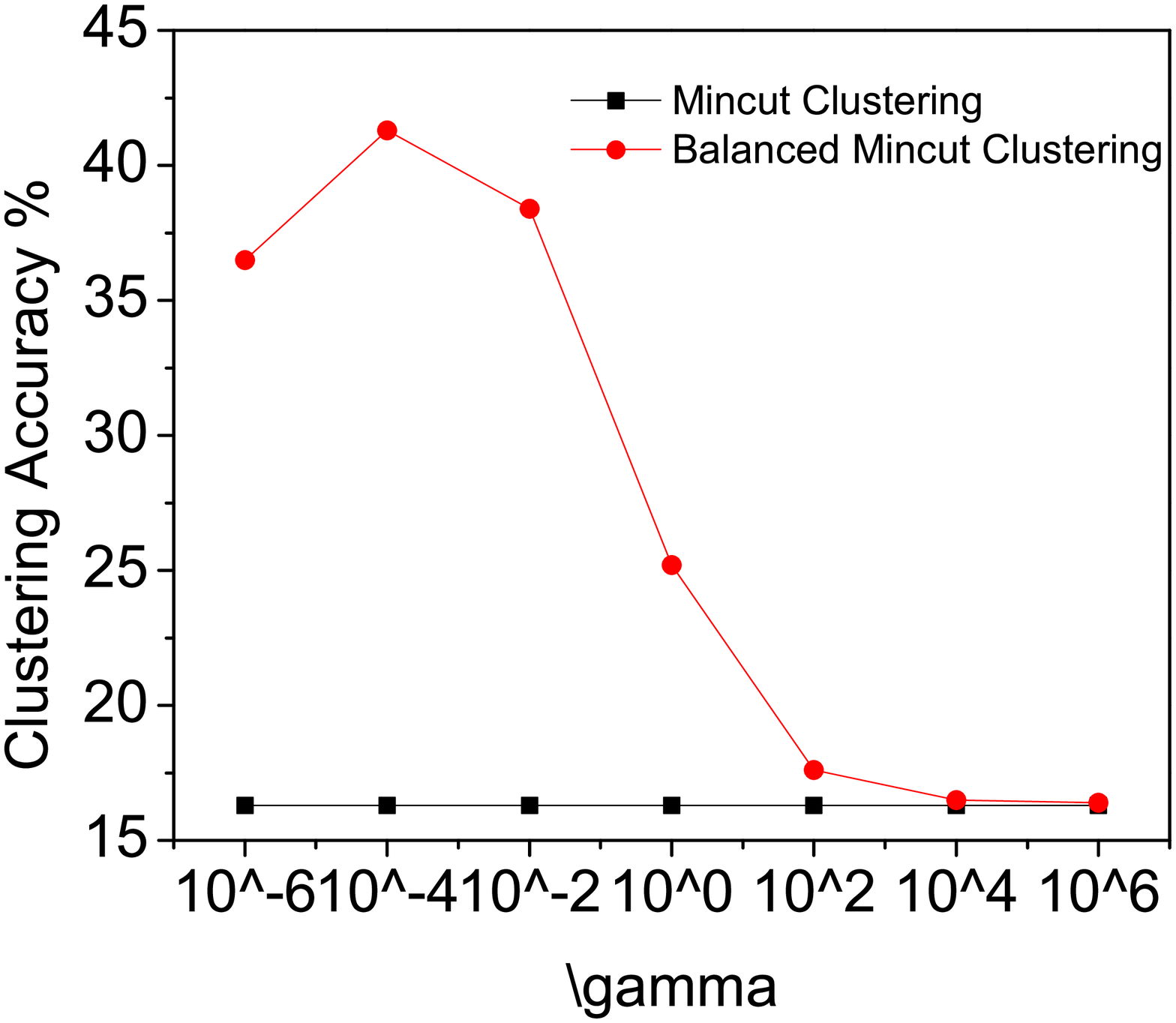}}
\subfigure[]{
\includegraphics[scale=0.2]{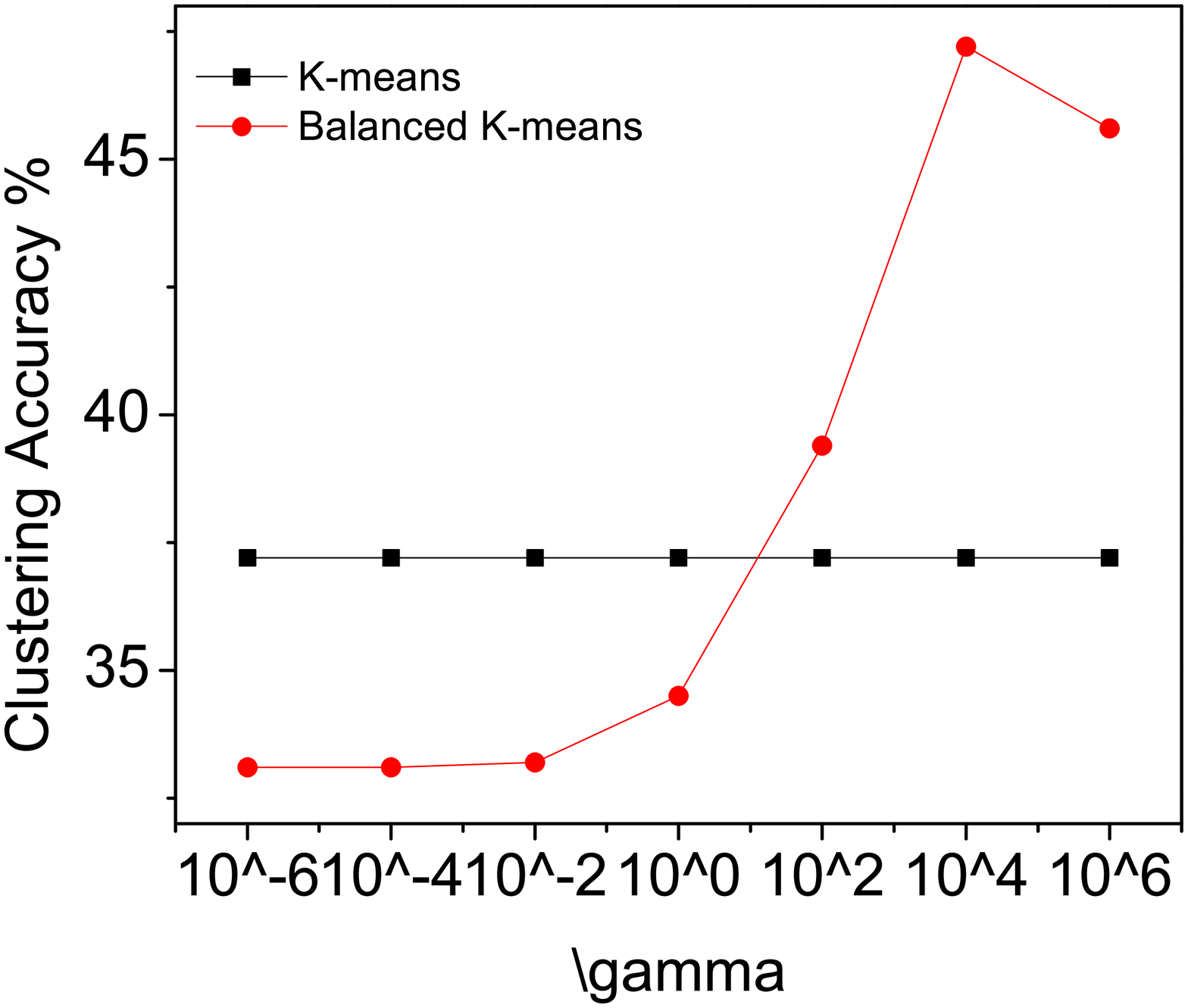}}
\subfigure[]{
\includegraphics[scale=0.2]{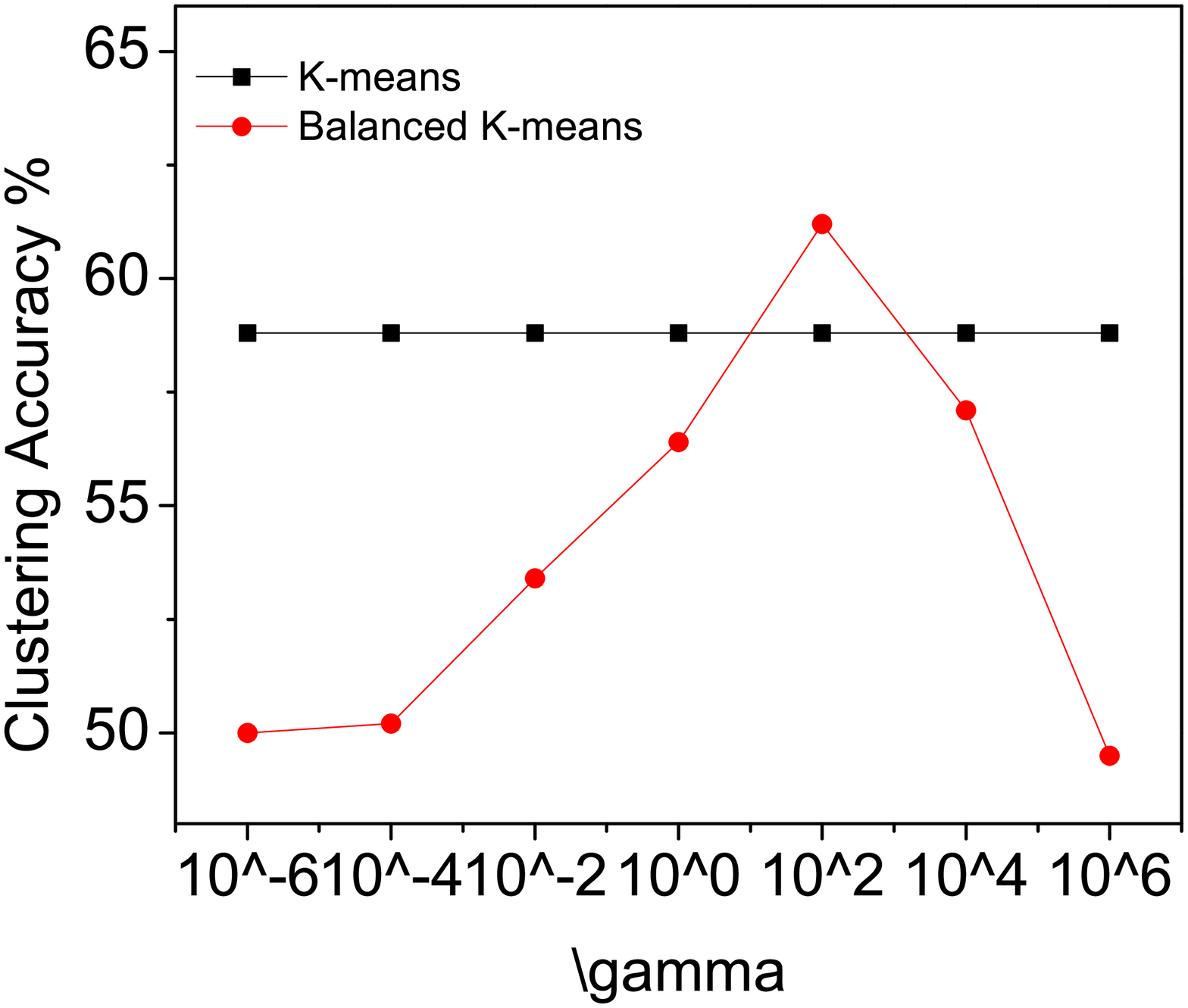}}
\subfigure[]{
\includegraphics[scale=0.2]{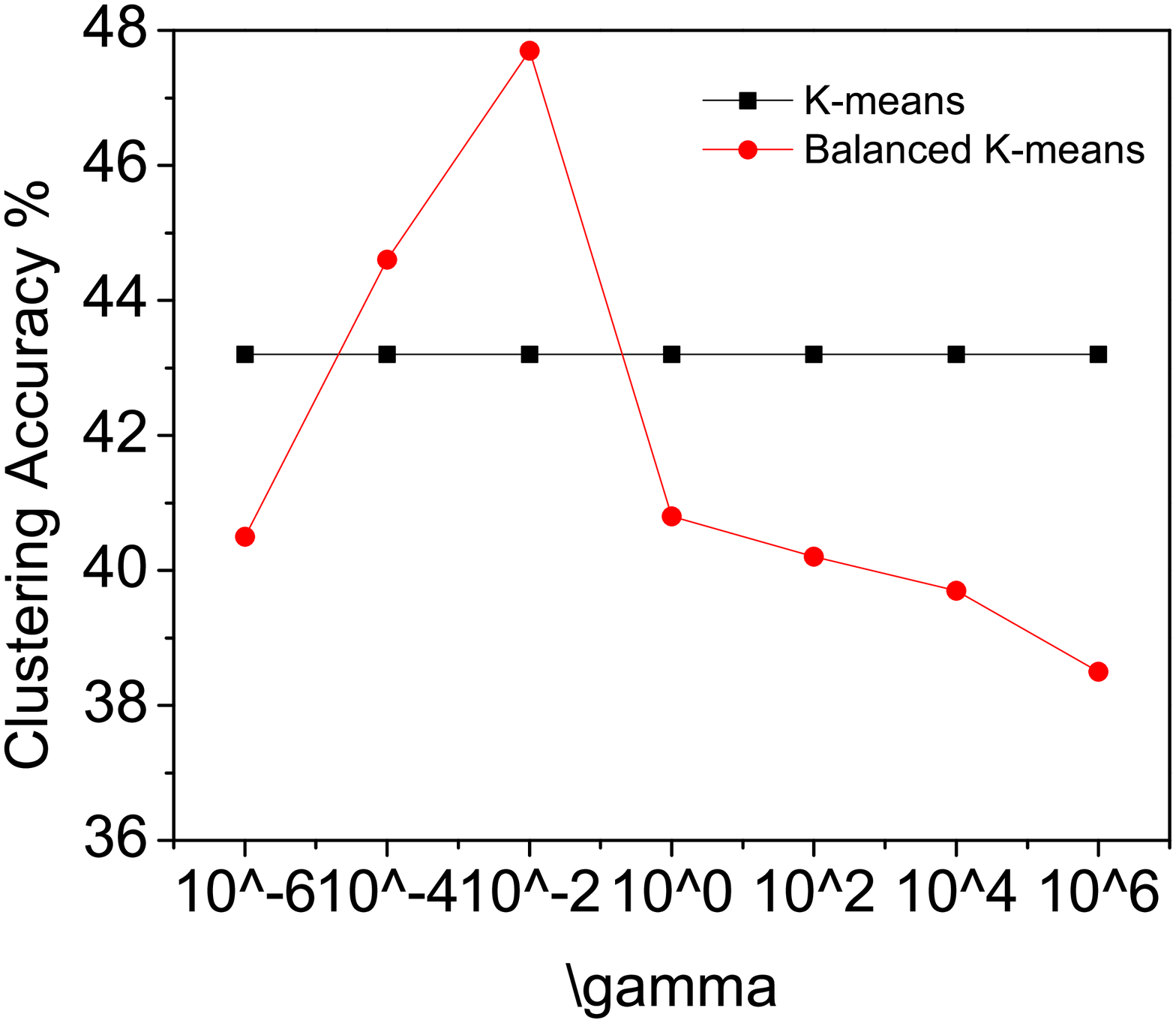}}
\subfigure[]{
\includegraphics[scale=0.2]{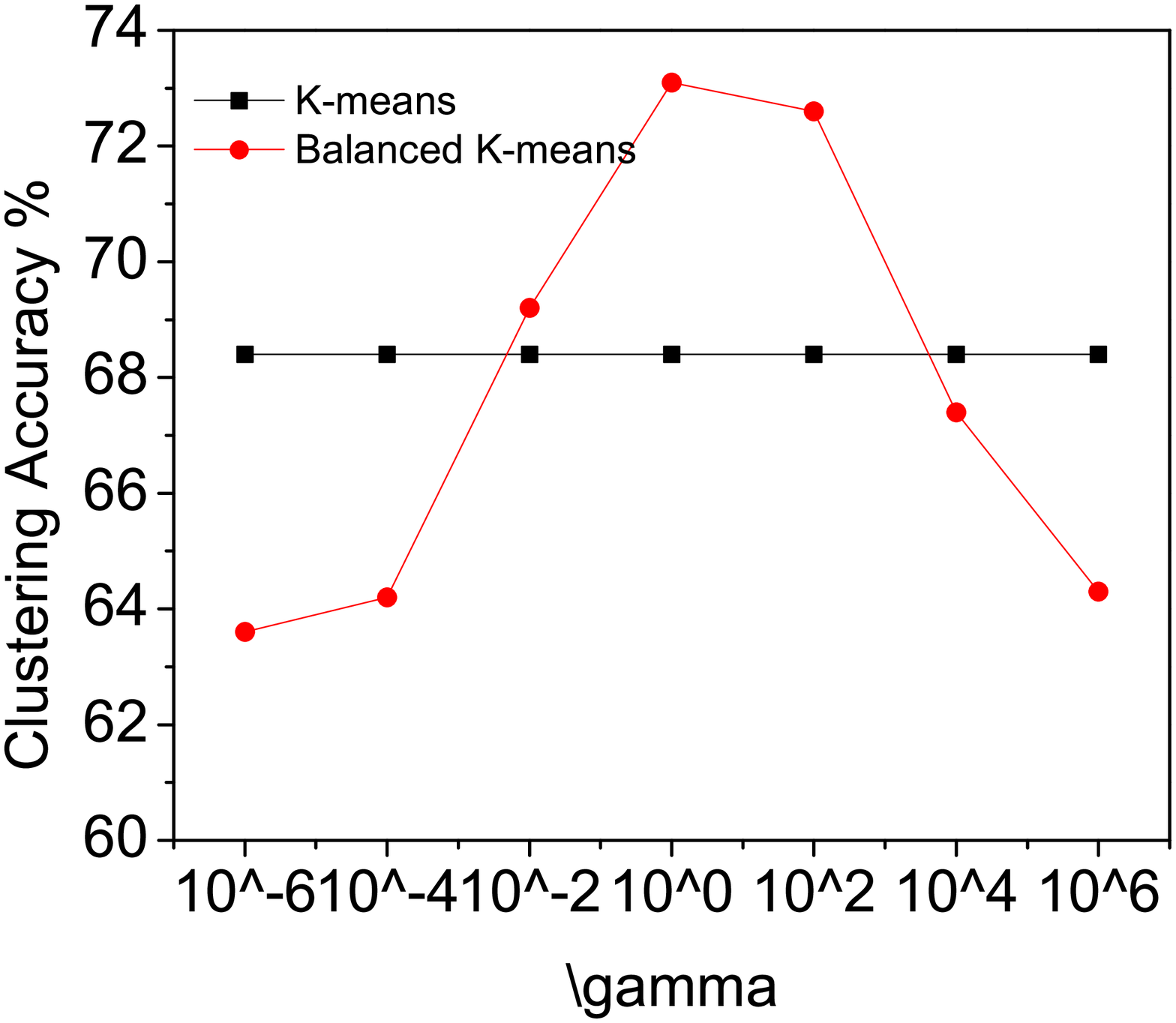}}
\subfigure[]{
\includegraphics[scale=0.2]{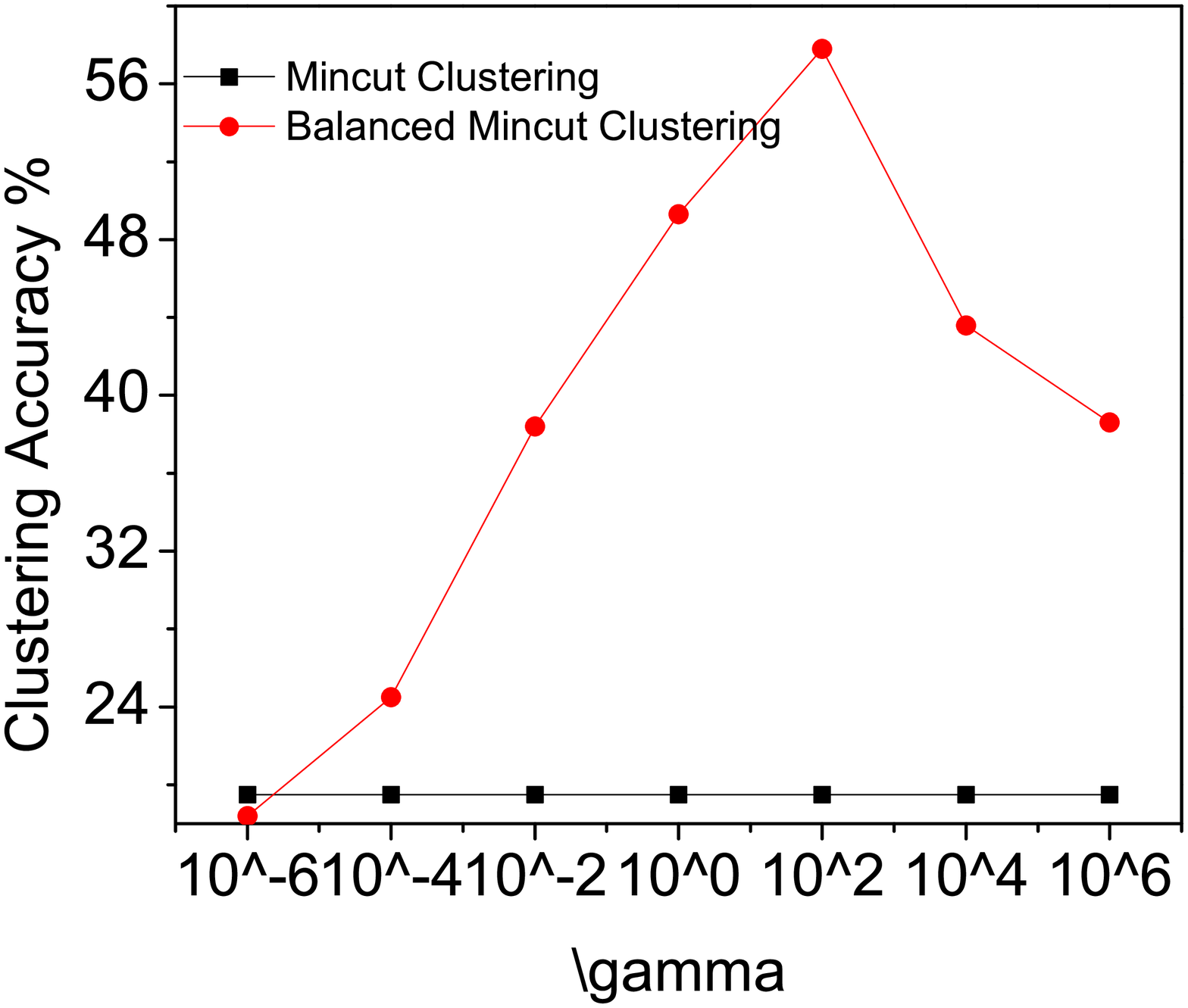}}
\subfigure[]{
\includegraphics[scale=0.2]{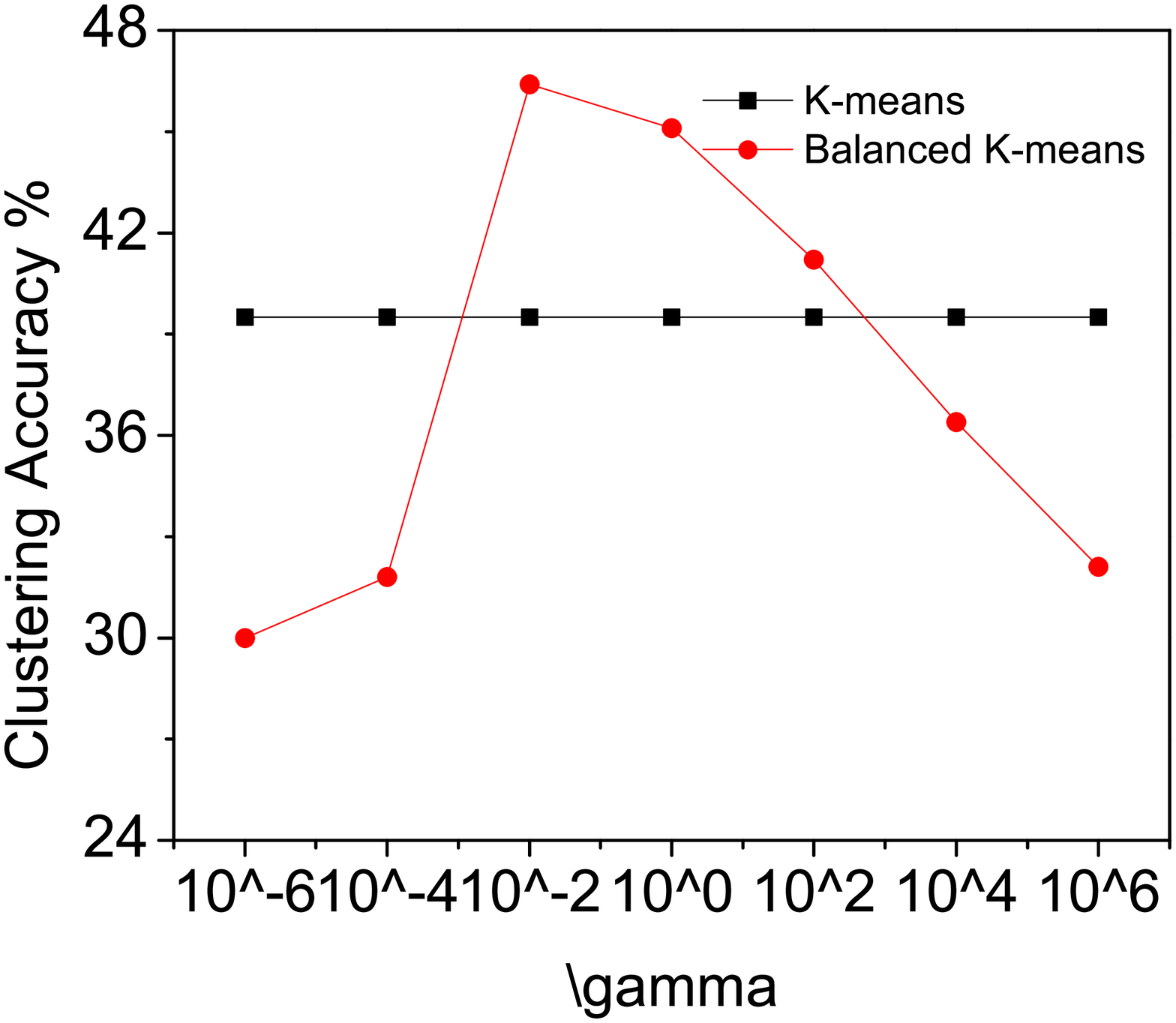}}
\caption{Parameter sensitivity of Balanced $k$-means.  (a) MNIST (b) USPS (c) YaleB (d) ORL (e) JAFFE (f) HumanEVA (g) Coil20 (h) CMU-PIE (i) UMIST. From the results, we can observe that the parameter has a significant impact on the performance.}
\label{kmeanssen}
\end{figure*}

\subsection{Evaluation Metrics}

Following related work, we adopt clustering accuracy (ACC) and normalized mutual information (NMI) as our evaluation metrics in our experiments.

Let $q_i$ represent the clustering label result from a clustering algorithm and $p_i$ represent the corresponding ground truth label of arbitrary data point $x_i$. Then $ACC$ is defined as follows:

\begin{equation}
ACC = \frac{\sum_{i=1}^n \delta (p_i, map(q_i))}{ n },
\end{equation}
where $\delta(x, y) = 1$ if $x=y$ and $\delta (x, y) = 0$ otherwise. $map(q_i)$ is the best mapping function that permutes clustering labels to match the ground truth labels using the Kuhn-Munkres algorithm. A larger ACC indicates a better clustering performance.

For any two arbitrary variable $P$ and $Q$, NMI is defined as follows \cite{NMI}:

\begin{equation}
NMI = \frac{I(P, Q)}{\sqrt{H(P)H(Q)}},
\end{equation}
where $I(P, Q)$ computes the mutual information between $P$ and $Q$, and $H(P)$ and $H(Q)$ are the entropies of $P$ and $Q$. Let $t_l$ represent the number of data in the cluster $\mathcal{C}_l(1 \leq l \leq c)$ generated by a clustering algorithm and $\widetilde{t_h}$ represent the number of data points from the $h$-th ground truth class. NMI metric is then computed as follows \cite{NMI}:

\begin{equation}
NMI = \frac{\sum_{l=1}^c \sum_{h=1}^c t_{l,h} log(\frac{n \times t_{l, h}}{•t_l\widetilde{t_h}})}{\sqrt{(\sum_{l=1}^c t_l \log \frac{t_l}{n})(\sum_{h=1}^c \widetilde{t_h} \log \frac{\widetilde{t_h}}{n})}},
\end{equation}
where $t_{l,h}$ is the number of data samples that lies in the intersection between $\mathcal{C}_l$ and $h$th ground truth class. Similarly, a larger NMI indicates a better clustering performance.

\subsection{Comparison among $k$-means based methods}

In this section, we report the performance comparison using $k$-means, DisCluster, DisKmeans, AKM, HKM and Balanced $k$-means in terms of clustering accuracy (ACC) and NMI in Table \ref{kmeansaccuracy} and Table \ref{kmeansnmi}.

From the experimental results, we have the following observations:

\begin{enumerate}
\item When compared to the classical $k$-means clustering, DisCluster and DisKmeans algorithms, DisCluster and DisKmeans generally have better performance. This may be because discriminative dimension reduction is integrated into a single framework. Thus, each cluster is more identifiable, which helps enhance the clustering performance. We can therefore conclude that discriminative information is beneficial for clustering.
\item HKM achieves the second best performance among the comparison algorithms, which indicates that most active points changing their cluster assignments at each iteration are located on or near the cluster boundaries.
\item The proposed balanced $k$-means always gets the best performance on all the datasets. This experimental result demonstrates that the exclusive lasso is able to pose balance constraint to $k$-means clustering. By minimizing the exclusive lasso, the most balanced clustering result is obtained.
\end{enumerate}

\begin{figure*}[!ht]
\centering
\subfigure[]{
\includegraphics[scale=0.21]{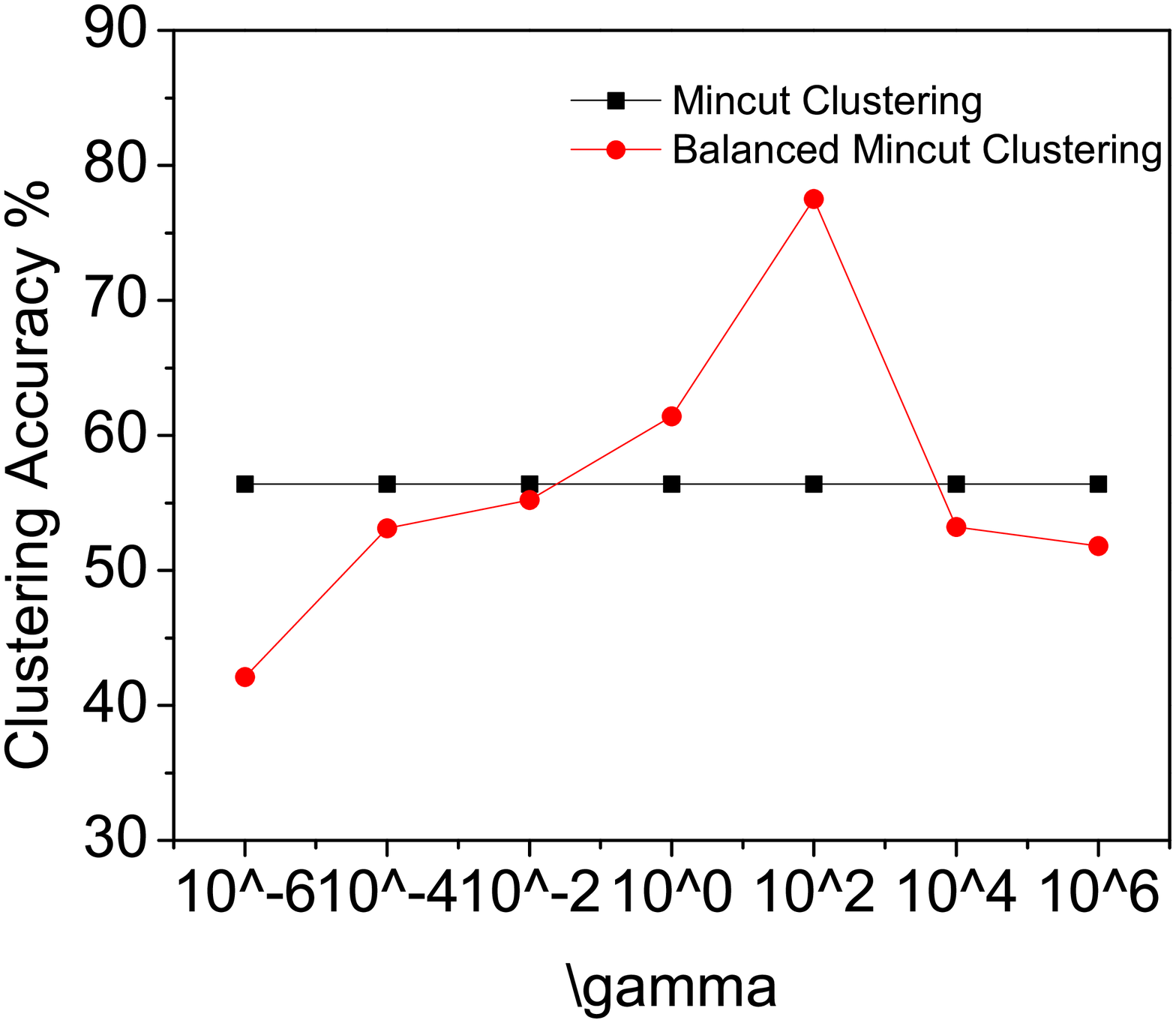}}
\subfigure[]{
\includegraphics[scale=0.21]{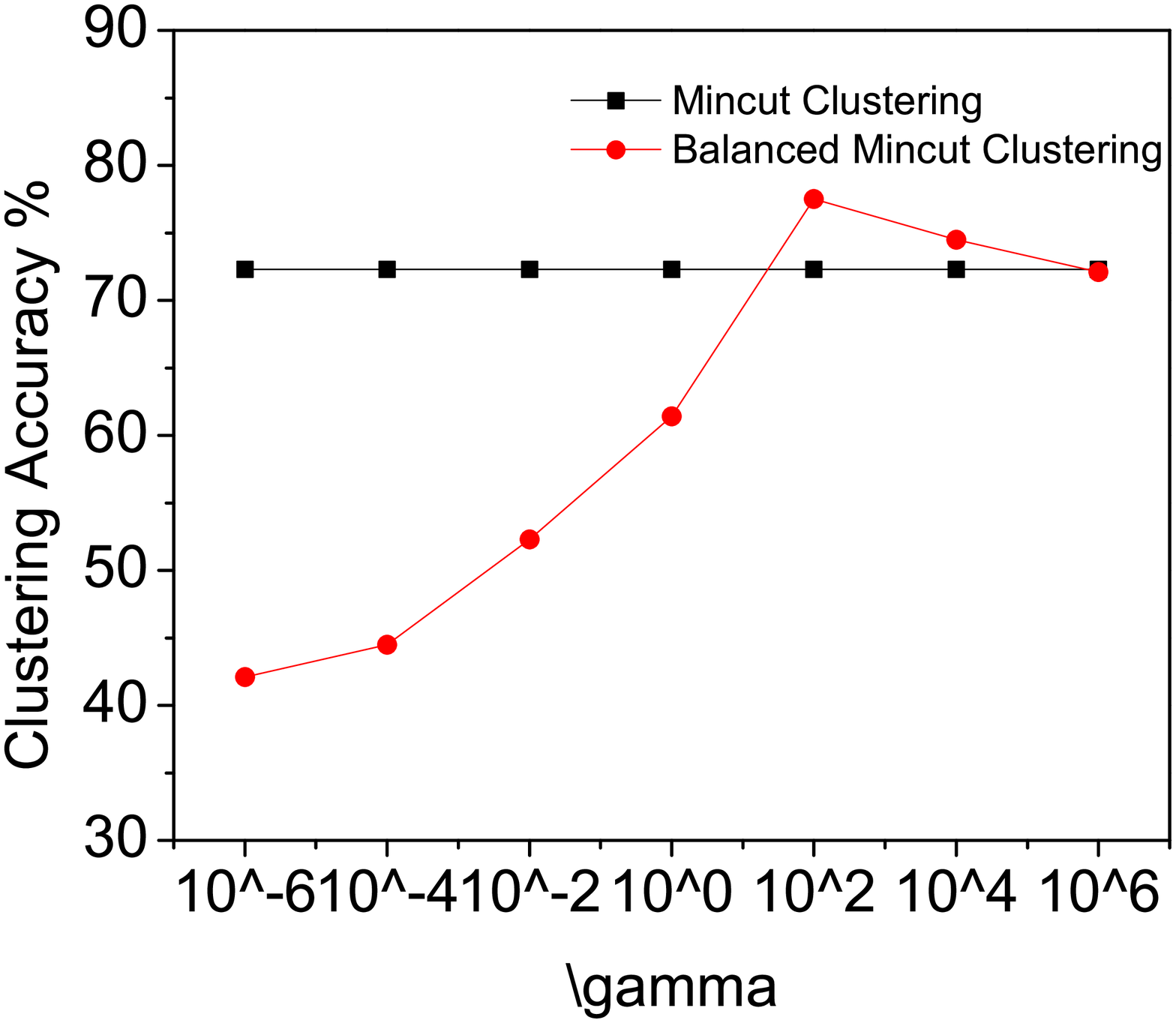}}
\subfigure[]{
\includegraphics[scale=0.21]{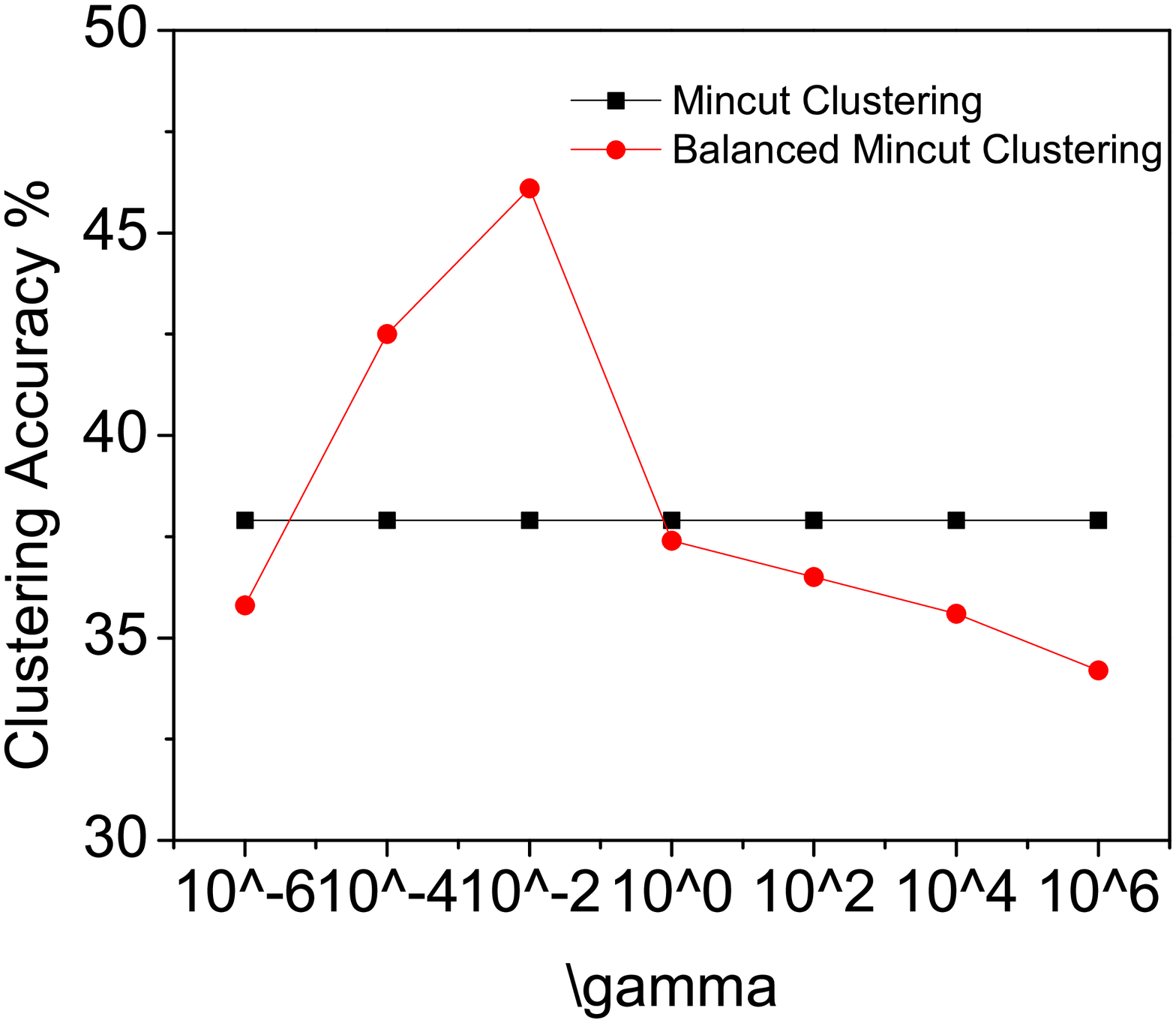}}
\subfigure[]{
\includegraphics[scale=0.21]{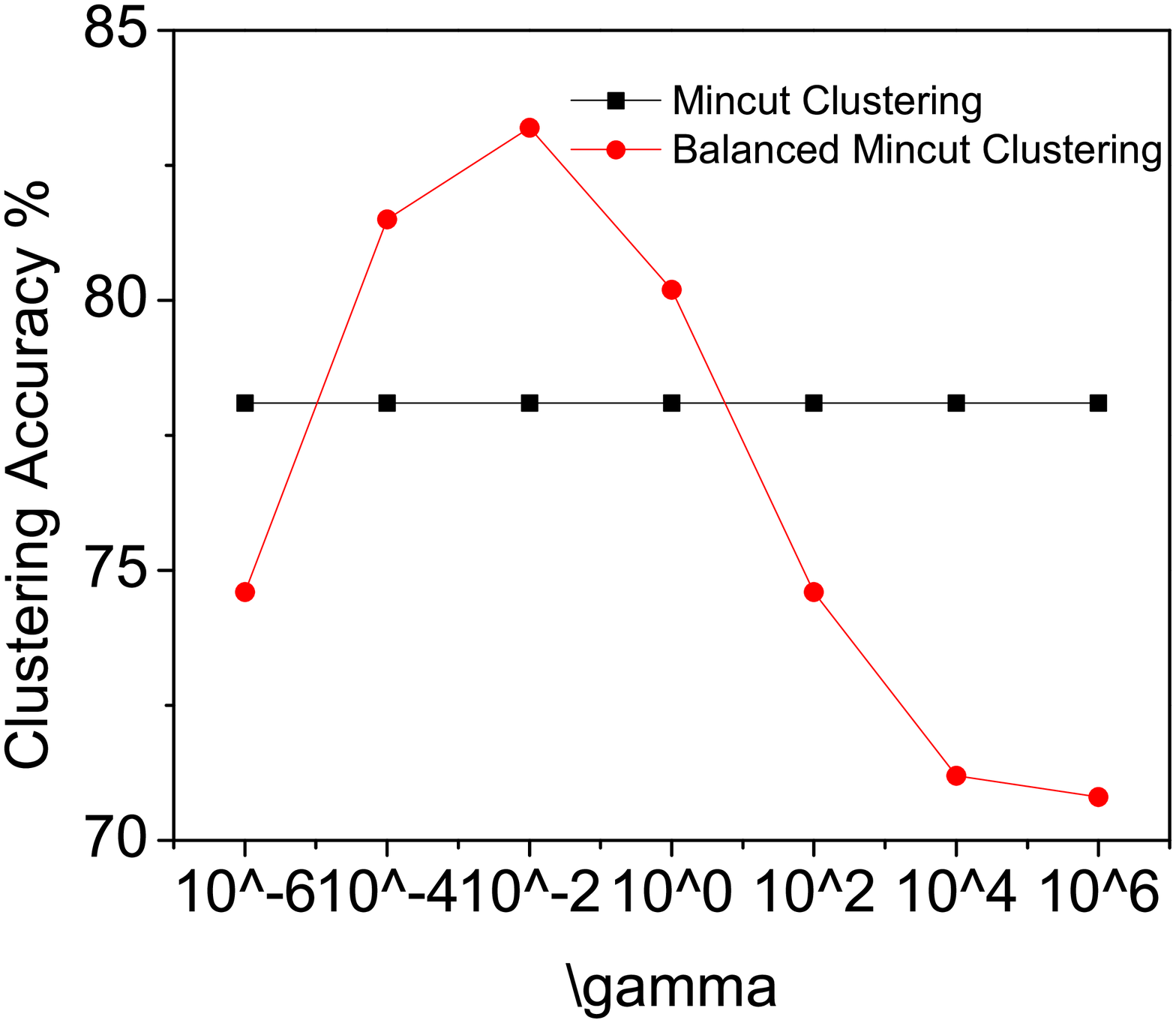}}
\subfigure[]{
\includegraphics[scale=0.21]{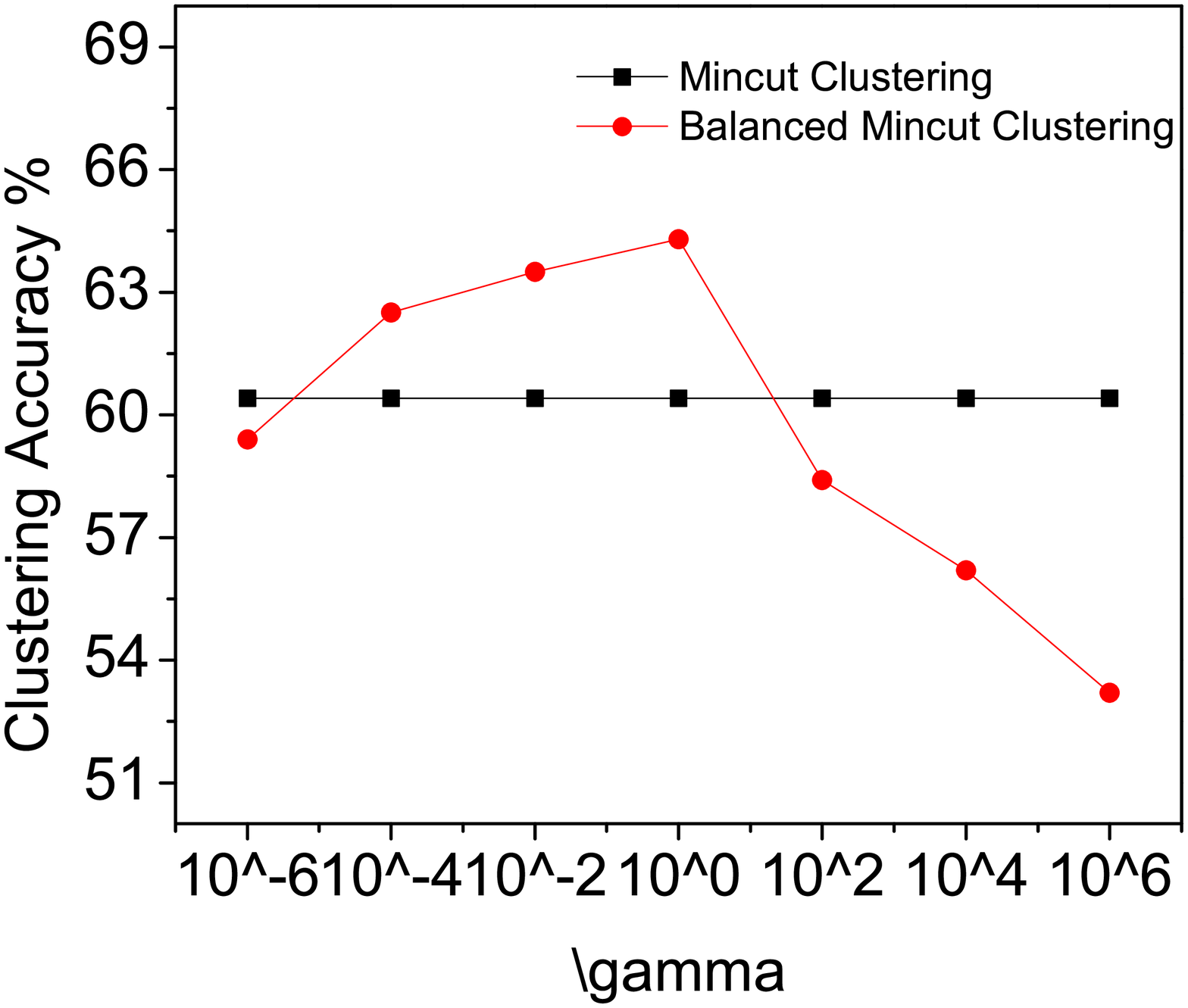}}
\subfigure[]{
\includegraphics[scale=0.21]{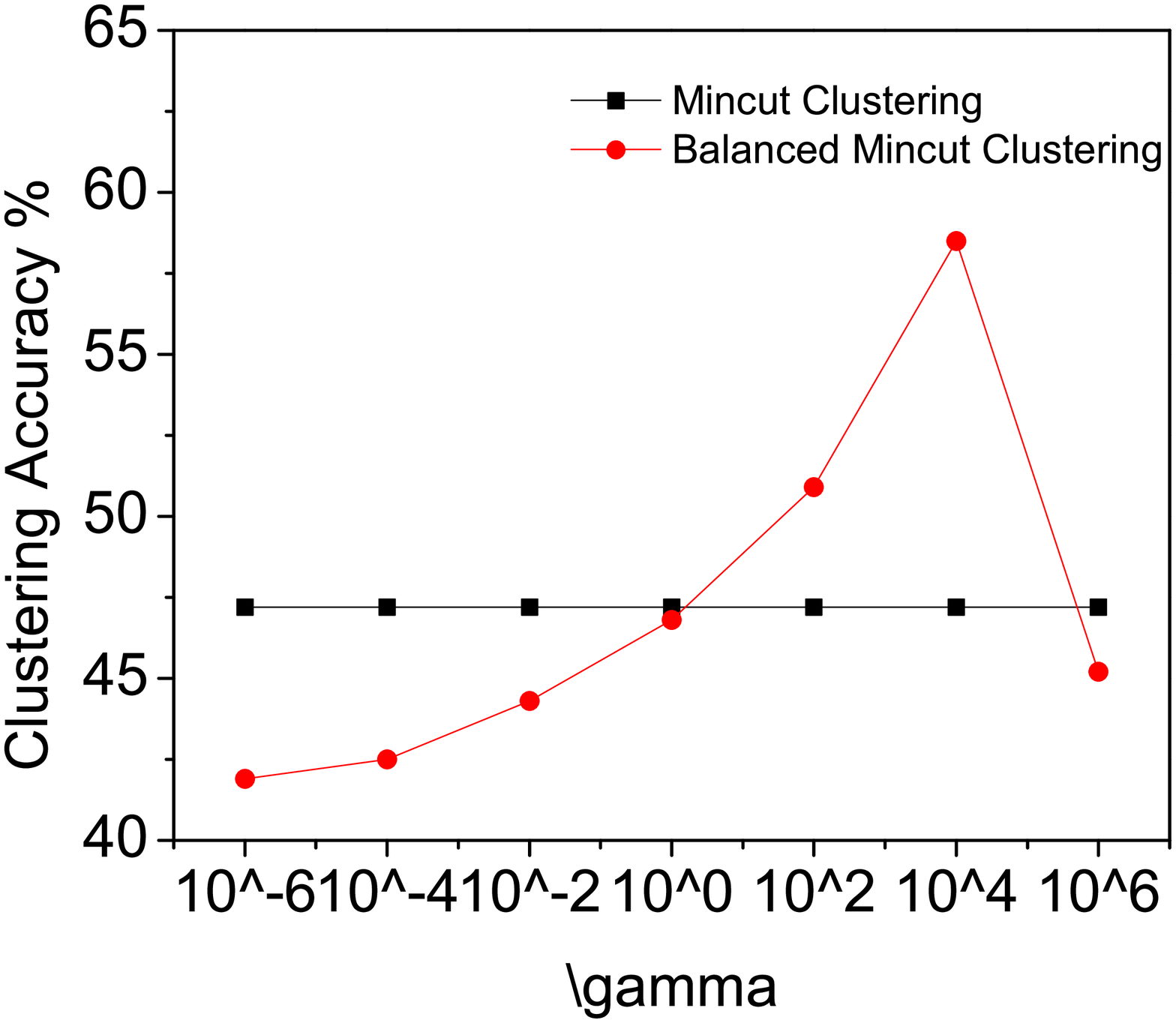}}
\subfigure[]{
\includegraphics[scale=0.21]{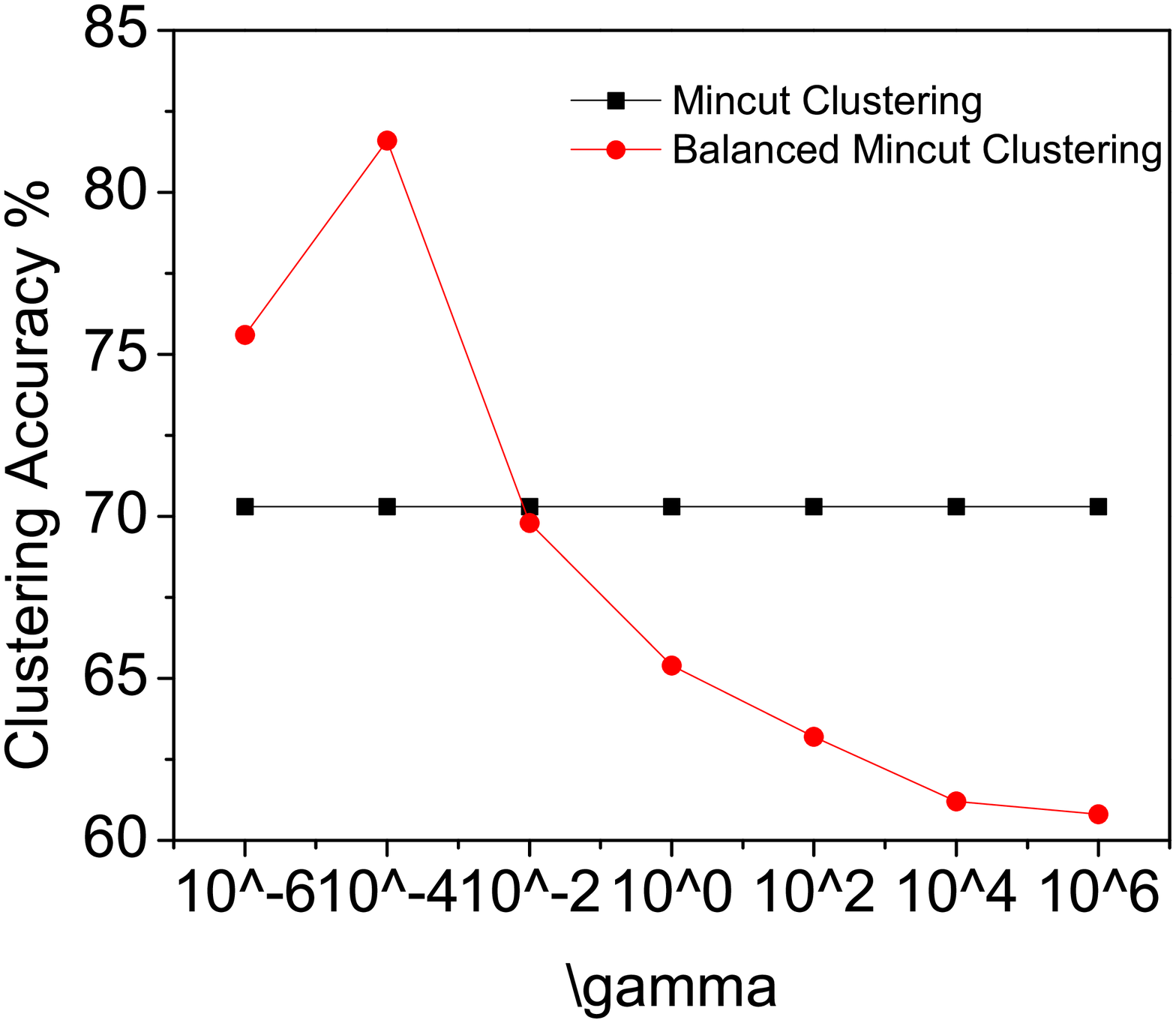}}
\subfigure[]{
\includegraphics[scale=0.21]{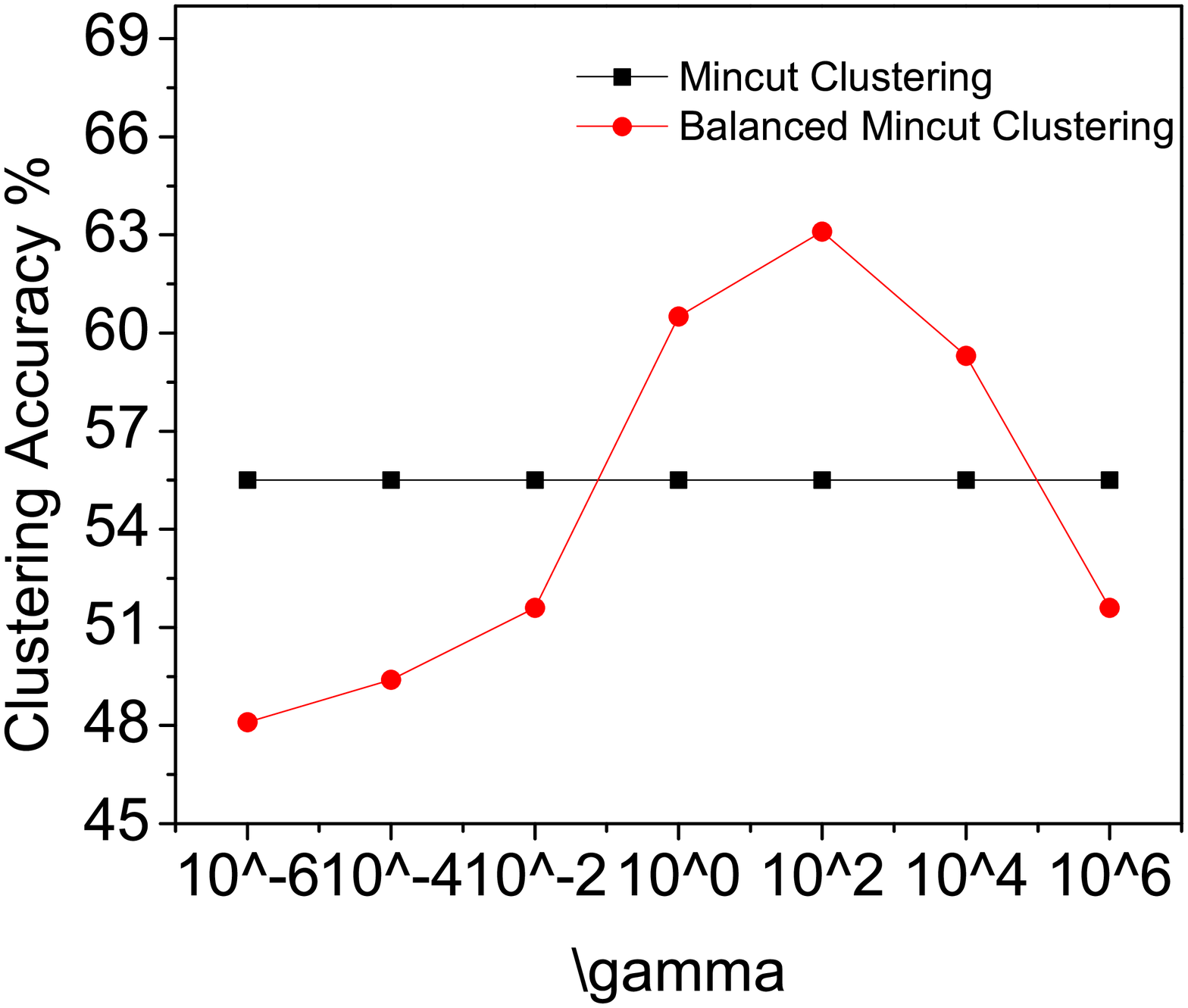}}
\subfigure[]{
\includegraphics[scale=0.21]{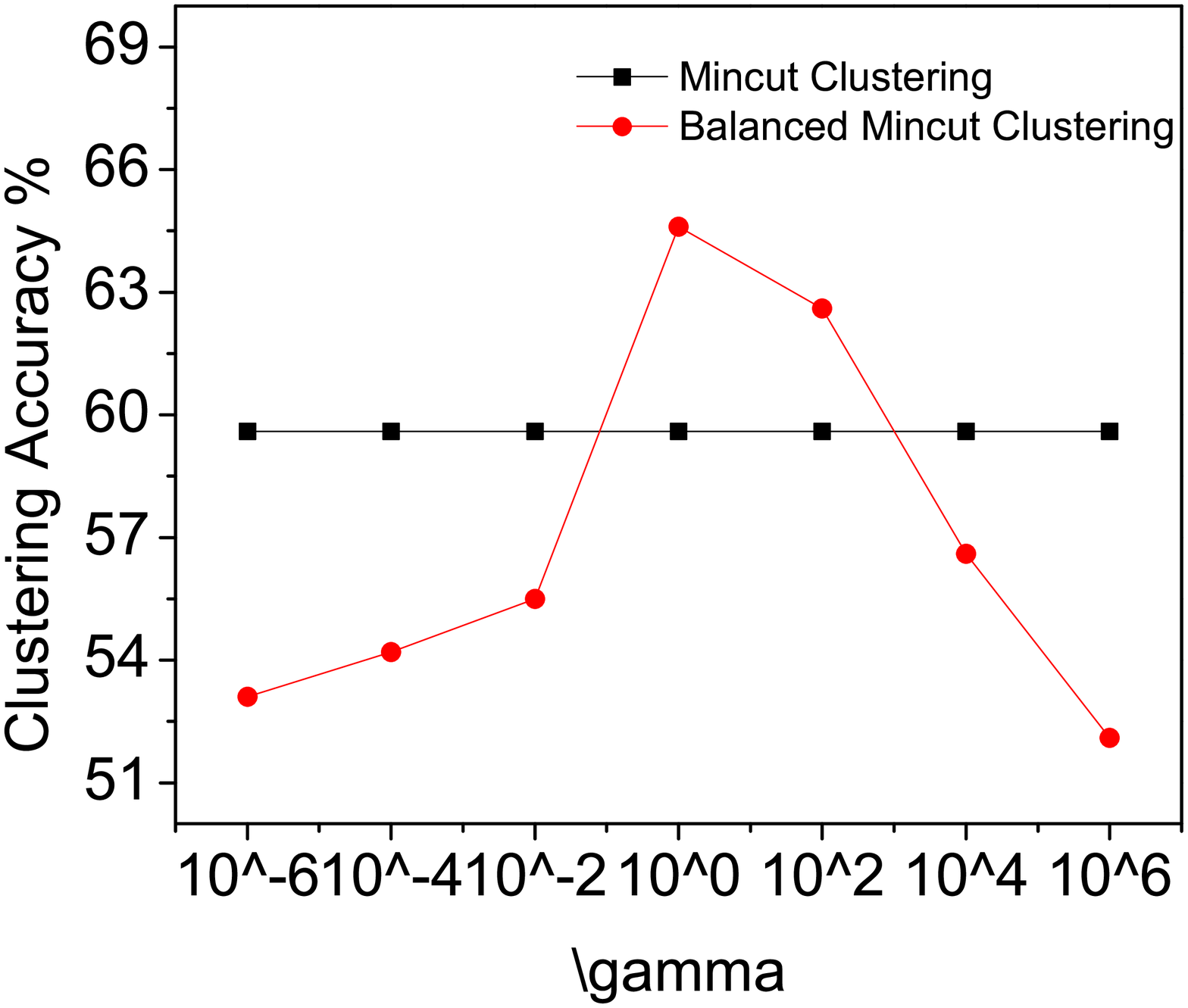}}
\caption{Parameter sensitivity of Balanced Min-Cut w.r.t $\gamma$. (a) MNIST (b) USPS (c) YaleB (d) ORL (e) JAFFE (f) HumanEVA (g) Coil20 (h) CMU-PIE (i) UMIST. From the results, we can observe that the parameter, $\gamma$ has a significant impact on the performance. To be more specific, better performance is achieved when $\gamma$ is in the range of $\{10^{-2}, 10^2\}$.}
\label{mincutsen}
\end{figure*}

\subsection{Comparison among graph clustering algorithms}

To evaluate performance of the proposed balanced min-cut clustering algorithm, we compare it to the classical Min-Cut clustering, MinMax Cut clustering \cite{minmax}, Ratio Cut clustering \cite{ratiocut}, Normalized Cut Clustering \cite{nomalizedcut} and Balanced Min-Cut clustering on the nine benchmark datasets.

We have the following observations from the experimental results:

\begin{enumerate}
\item Compared with the $k$-means based clustering, the graph clustering algorithms generally achieve better performance. This observation indicates that it is beneficial to utilize the pairwise similarities between all data points from a weighted graph adjacency matrix that contains much helpful information for clustering.

\item MinMax Cut Clustering always gets the second best performance, which demonstrates that min-max clustering principle can result in more balanced partitions than the other comparison graph clustering methods.

\item The proposed balanced min-cut clustering algorithm consistently outperforms the other graph clustering algorithms. From this result, we can conclude that the exclusive lasso is able to exert balance constraint on min-cut clustering and thus achieves the most balanced clustering result.
\end{enumerate}

\subsection{Parameter Sensitivity of the Proposed Algorithm}

In this section, we study the parameter sensitivity of balanced $k$-means and balanced min-cut. Fig \ref{kmeanssen} shows the accuracy ($y$-axis) of balanced $k$-means for different $\gamma$ values ($x$-axis). From the experimental result, we can observe that $\gamma$ has a significant impact on the performance of balanced $K$-means. 

We additionally show the parameter sensitivity of balanced min-cut in Fig. \ref{mincutsen}. Similarly to the proposed balanced $k$-means, the performance is heavily influenced by the parameter $\gamma$. To be more specific, better performance is usually attained when $\gamma$ is in the range of $\{10^{-2}, 10^2\}$. 

The experiments on both algorithms suggest the importance of designing an auto-tuning method for parameter selection. However, how to decide the optimal parameter is currently out of the scope in this work. We shall focus on this problem in the future.

\section{Conclusion}
In this paper, we have addressed the issue of balanced clustering which has not been studied in data mining. The exclusive lasso has been exploited to exert the balance constraint for introduce its ability to induce competition among different categories for the same data point. Particularly, we incorporated the exclusive lasso into $k$-means and min-cut clustering algorithms, which shall facilitate these two mainstream clustering algorithms to better cope with balanced data points. On the other hand, our objective functions are non-smooth and difficult to optimize. A new iterative approach is then designed to solve the problems. We have performed extensive experiments on a copious of datasets to evaluate performance of the proposed balanced $k$-means and balanced min-cut in terms of clustering accuracy and NMI. The experimental results show that our proposed algorithms always outperform the other comparison state-of-art clustering algorithms, which validates that utilizing the exclusive lasso indeed helps achieve the most balanced clustering.

\bibliographystyle{IEEEtran}

\bibliography{ICDE}

\end{document}